\newtheorem{theorem}{Theorem}
\newtheorem{corollary}[theorem]{Corollary}
\theoremstyle{definition}
\newtheorem{definition}[theorem]{Definition}
\newtheorem{remark}[theorem]{Remark}
\numberwithin{figure}{section}
\title{A simple algorithm for output range analysis for deep neural networks}
\author{{Helder Rojas} \\
	Department of Mathematics\\
	Imperial College London\\
	London, United Kingdom \\
	\href{mailto:me@somewhere.com}{h.rojas-molina23@imperial.ac.uk} \\
\And
	{Nilton Rojas} \\
	Escuela Profesional de Ciencias de la Computación\\
	Universidad Nacional de Ingeniería\\
	Lima, Perú \\
	\href{mailto:me@somewhere.com}{nrojasv@uni.pe} \\
  \And
	{Espinoza J. B.} \\
	Escuela Profesional de Ingeniería Estadística\\
	Universidad Nacional de Ingeniería\\
	Lima, Perú\\
	\href{mailto:me@somewhere.com}{jespinozas@uni.edu.pe} \\
 \And
        {Luis Huamanchumo} \\
	Escuela Profesional de Ingeniería Estadística\\
	Universidad Nacional de Ingeniería\\
	Lima, Perú \\
	\href{mailto:me@somewhere.com}{lhuamanchumo@uni.edu.pe} \\
}
\begin{document}
\maketitle

\begin{abstract}
This paper presents a simple algorithm for the output range estimation problem in Deep Neural Networks (DNNs) by integrating a Simulated Annealing (SA) algorithm tailored to operate within constrained domains and ensure convergence towards global optima. The method effectively addresses the challenges posed by the lack of local geometric information and the high non-linearity inherent to DNNs, making it applicable to a wide variety of architectures, with a special focus on Residual Networks (ResNets) due to their practical importance. Unlike existing methods, our algorithm imposes minimal assumptions on the internal architecture of neural networks, thereby extending its usability to complex models. Theoretical analysis guarantees convergence, while extensive empirical evaluations—including optimization tests involving functions with multiple local minima—demonstrate the robustness of our algorithm in navigating non-convex response surfaces. The experimental results highlight the algorithm’s efficiency in accurately estimating DNN output ranges, even in scenarios characterized by high non-linearity and complex constraints.
\end{abstract}

\keywords{Output Range Analysis, Deep Neural Networks, Simulated Annealing, Global Optimization, Non-Convex Optimization, Residual Networks.}

\section{Introduction}
Unquestionably, in recent decades, Deep Neural Networks (DNNs) have been by far the most widely used tools to perform complex machine learning tasks. More recently, DNNs have been used in cyber-physical systems critical to public security and integrity; such as autonomous vehicle driving and air traffic systems. Therefore, it is of pressing interest to implement security verification systems for DNNs. One of the objectives in this line of interest is the verification of the maximum and minimum values assumed by a DNN, an objective commonly known as the range estimation problem, see \cite{dutta2018output,wang2018formal}. This interest in estimating the range assumed by a DNN responds to the objective of diagnosing and validating the previously executed training. However, the relationships established between the inputs and outputs of a DNN are highly non-linear and complex, making it difficult to understand with existing tools today. Due to this inability, DNNs are commonly referred to as black boxes. This nature of DNN makes the range estimation problem particularly challenging because there is no geometric information about the response surface generated by a DNN. For example, if local geometric information about the generated surface by a DNN was obtained, such as the gradient vector and the Hessian matrix at each point, the problem could be addressed with conventional nonlinear programming techniques. However, in a DNN it is only possible to obtain point information about the estimated response, without any local knowledge around that point. These facts, added to the high nonlinearity of a DNN, make the range estimation problem nontrivial.  In particular, this problem is even more challenging when dealing with a DNN, in contrast to other types of neural network, since its multiple layers increase the complexity of the problem. In this paper, we present a simple algorithm based on global optimization techniques, initially motivated by the classical Simulated Annealing (SA) \cite{kirkpatrick1983optimization}, to solve the range estimation problem for a wide spectrum of neural networks, and in particular DNNs. In contrast to classical SA, our algorithm considers the existence of a restricted domain for searching for optimal points, which eventually corresponds to the domain of the training data of the analyzed neural network. Our algorithm does not make use of any information about the internal architecture of the analyzed DNN, which makes it applicable to a much larger spectrum than other proposals available in the literature, see for example \cite{dutta2018output,wang2018formal,katz2017reluplex}. Our algorithm considers restricted search spaces, which underlie the nature of the output range estimation problem of neural networks. Furthermore, we present results, both theoretical and empirical, that guarantee the convergence of our algorithm towards the optimal points, which leads to a good estimation of the output range.

    In line with the objective and motivation stated—estimating, in a reliable way, the extreme values that the network can produce under bounded inputs while treating it as a black box—we adopt a global search strategy on a finite domain that combines four simple and complementary ideas. First, symmetric random proposals are generated around the current point to explore without directional bias. Second, a cyclic reflection is applied at the domain boundaries: any proposal that leaves the range “bounces” back in by symmetry, preventing artificial accumulation near the boundary. Third, a probabilistic acceptance criterion balances exploration and exploitation and, in a controlled way, allows accepting moves that do not immediately improve in order to escape local optima. Fourth, the intensity of exploration is reduced gradually so that the search progressively concentrates on the most promising regions. These elements are articulated into a single update mechanism: at each iteration a candidate is proposed (with reflection if needed), the network output is evaluated at that point, and acceptance or rejection is decided; if accepted, the state is updated, and in all cases the best value found and its location are recorded. The formal details and assumptions are presented in the methodology Section \ref{SA}. The rationale for this dynamics is that it combines broad, unbiased coverage with progressive focusing. Symmetric proposals together with cyclic reflection support a balanced exploration of the admissible space, preventing the trajectory from sticking to the boundaries and encouraging visits to distinct regions of the domain. The probabilistic acceptance criterion acts as a safeguard against entrapment: early on it facilitates barrier crossing and exits from local optima; later, as the exploration intensity is gradually reduced, it acts as an adaptive “zoom” that concentrates sampling where the evidence is stronger. Moreover, by always retaining the best value observed, the procedure does not lose progress: even while exploring alternatives, the best estimate of the extreme improves monotonically. In a black-box setting with bounded inputs, this sequence makes visits to neighborhoods of the true extrema very likely and, under mild continuity, yields estimates close to the extreme values of interest.

    Our methodology for output range estimation in deep neural networks shares similarities with global optimization methods used in previous studies. For example, Gromov et al. \cite{Gromov2024} and Kovaleva \& Smirnov \cite{Kovaleva2024} explore optimization techniques for finding global extrema in complex systems, including neural networks. These methods, such as Simulated Annealing (SA) and memetic algorithms, are effective for general optimization problems. However, while these studies focus on optimizing various types of systems, our work introduces a novel approach by applying SA with reflective boundary conditions specifically designed for deep neural networks with constrained input spaces. This reflects a need in the literature for methods that preserve boundaries while optimizing over complex, high-dimensional spaces. Furthermore, previous works like the one by van Laarhoven \& Aarts \cite{LaarhovenAarts1987} and Nourani \& Andresen \cite{NouraniAndresen1998} provide strong theoretical foundations for SA in global optimization. Our approach complements these methods by maintaining asymptotic convergence guarantees while incorporating constraints imposed by the architecture of deep neural networks. Additionally, recent studies such as the one by Gromov et al. \cite{Gromov2024} on Bayesian optimization and the study of evolutionary algorithms for global optimization by Kovaleva \& Smirnov \cite{Kovaleva2024} further emphasize the importance of global extrema search in machine learning contexts. Our method extends these approaches by directly addressing the challenges posed by the constrained domains in neural network outputs, offering a practical solution for deep learning models. Finally, optimization methods and extremal problems in neural network training, such as those explored by Gromov et al. \cite{Gromov2024}, are directly addressed by our technique. While previous studies have applied evolutionary strategies and optimization algorithms to explore global minima, our approach with reflective boundaries adds a layer of robustness by ensuring that the optimization process remains within feasible input domains, thus overcoming the limitations of traditional methods in constrained environments.
 
\textbf{Outline}. In the section ``Residual Neural Networks" we make a mathematical description of a very particular family of deep neural networks called Residual Networks. Although our methodology works for any particular family, we focus on these neural networks given their relevance in applications and for illustrative purposes of the use of our algorithm. In the section ``Output Range Analysis Problem", we specifically define the output range analysis problem for neural networks, mentioning its characteristics, limitations, and challenges. In the section titled ``Simulated Annealing with Boundary Conditions" we develop an exhaustive treatment of Simulated Annealing for limited domains, adapting its properties and guaranteeing its results for our purposes. In the section titled ``Algorithm Derivation", from the above, we derive an algorithm to solve the output range analysis problem. Finally, in the section ``Experimental Evaluation" we present a part of the experimental evaluations that we have executed to numerically guarantee our results and the performance of our proposed algorithm.

\section{Related Work}

 Recently, various methodologies have been proposed for the analysis of the output range of DNNs. \cite{dutta2018output} propose a novel approach to estimate the output ranges of these models given specific input sets. Their novel algorithm utilizes local search techniques and linear programming to efficiently compute the maximum and minimum values in a DNN over the input set. The algorithm repeatedly applies local gradient descent to identify and eliminate local minima of the neural network's output function. Once local optima are identified, the final global optimum is validated through a mixed integer programming model. The authors consider the algorithm's computational efficiency while still maintaining the accuracy of the output range estimation. However, common issues like scalability and the potential complexity of a DNN are not fully addressed. Furthermore, the methodology makes specific assumptions about the neural network, which means that the methodology can only be applied to a rather restricted class of neural networks.

On the other hand, autonomous vehicles, collision avoidance systems, and others are part of a real-world security-critical domain, making it important to check the security properties of DNNs. These security properties guarantee that a DNN should have to ensure it operates safely and reliably against potential threats. There are frameworks for verifying the security properties of DNN that use interval arithmetic \cite{wang2018formal,katz2017reluplex,carlini2018provably}. \cite{wang2018formal} shows us a framework called ReluVal, which employs interval arithmetic to represent the ranges of inputs and outputs. For each input and operation in a DNN, the interval calculation is performed to explore the output range of the network. By analyzing these intervals, the framework can detect inputs that lead to incorrect classifications or outputs, thereby exposing potential weaknesses in the network. The experiments demonstrate the efficiency  of ReluVal in verifying security properties, outperforming other frameworks. Nevertheless, it may still struggle with scalability when applied to extensive networks or complex architectures. Furthermore, the framework is focused on specific types of adversarial attacks, leaving uncertainty regarding its performance on different architectures.

Other kinds of verification of DNNs are required: safety and robustness. \cite{tran2020nnv} presents the Neural Network Verification (NNV) software tool. The tool applies a collection of reachability algorithms and the use of a variety of set representations, such as polyhedra, star sets, zonotopes, and abstract-domain representations. Also, NNV can handle different types of neural networks and system models, allowing it to support both exact and over-approximate analysis. However, representations as polyhedra are limited by scalability and may not work effectively, as they involve costly operations, leading to a conservative and less practical reachability analysis in larger systems. Moreover, all experiments focus on specific exercises, which limits the exploration of new and complex DNN architectures. Then, \cite{liu2021algorithms} provide a comprehensive overview of methods developed to formally verify the safety and robustness of DNNs. The authors discuss different algorithms that incorporate reachability analysis, optimization, and search techniques to verify the appropriate configuration of the DNNs for the input-output properties across their entire input space. These algorithms are classified into different categories, such as layer-by-layer, reachability analysis, optimization-based, and combined search and verification methods. However, after exhaustive verification of these algorithms, the work concludes with some algorithms that present, like the other works, problems with the network scale. Finally, \cite{huang2020divide} analyze the output range of a neural network using a convex polygonal relaxation (over-approximation) of the activation functions to manage nonlinearity, allowing the problem to be formulated as a mixed-integer linear program (MILP). However, a key limitation is the addition of more integer variables in the MILP, which increases computational complexity, particularly in DNNs or those with more complex architectures.
  
In general, these proposals described above make restrictive assumptions about the network architecture which limits the use and application of these methodologies. In contrast, our proposal does not make any assumptions about the internal architecture of the evaluated neural network, which makes our algorithm widely applicable in output rank analysis for neural networks, and given the greater complexity, even more so when it comes to DNNs. Given these features of our algorithm, added to this are the theoretical and empirical guarantees that we present, and its simple and easy implementation, we believe that it is a relevant and practical contribution to the objectives in this application area.

\section{Residual Neural Networks}\label{Residual NN}

As mentioned above, since we make no assumptions about the internal architecture of the analyzed neural network, our algorithm works for a wide spectrum of neural networks. However, without loss of generality, we focus on  Residual Neural Networks given their relevance in applications and for illustrative purposes of the use of our algorithm. The Residual Networks (or ResNets) is a kind of DNN that was introduced in \cite{he2016deep}, for which in this chapter we show a mathematical description. Let $x\in\mathbb{R}^d$ denote the inputs and $y\in\mathbb{R}$ be the output in a supervised learning problem. In this learning context, the objective is the approximation of a function $f: x\in\mathbb{R}^d \mapsto y\in\mathbb{R}$ using a ResNet, which we denote as $\mathcal{F}$. The basic unitary components of a ResNet are called residual blocks, which are organized in a consecutive number of layers that are linked through nonlinear functions. To obtain a mathematical description of the operations carried out within a ResNet, we define some notations. We consider a network with a source layer, a output layer and $L$ hidden layers, where the $l$-th layer contains $H_l$ neurons ($l$-th layer width), for $l=0,\dots, L+1$. Note that $H_0=d$ and $H_{L+1}=1$, corresponding to the feature inputs and  target outputs respectively. We denote the output vector of the $l$-th layer by $x^{(l)}\in\mathbb{R}^{H_l}$ which corresponds to the input of the next layer. We set $x^{(0)}=x\in\mathbb{R}^d$ and $x^{(L+1)}=y\in\mathbb{R}$. For $l=1,\dots, L$, the $i$-th neuron performs an affine transformation on that layers input $x^{(l-1)}$ followed by a non-linear transformation
\begin{equation}
x^{(l)}_i=\sigma\Bigg(\sum_{j=1}^{H_{l-1}} W_{ij}x^{(l-1)}_j+b_i^{(l)}\Bigg)+x^{(l-1)}_i,
\quad 1\leq i\leq H_{l}.
\end{equation}

where $W_{ij}$ and $b_i^{(l)}$ are respectively known as the weights and bias associated with $i$-th neuron of layer $l$, while the function $\sigma(\cdot)$ is known as the activation function.
\begin{definition}[ResNet] Let $W^{(l)}=[W_{ij}^{(l)}]\in\mathbb{R}^{H_{l-1}\times H_{l}}$ be the weight matrix and $b^{(l)}=[b_i^{(l)}]\in\mathbb{R}^{H_{l}}$ be the bias vector for layer $l$, then the operations within each layer are described by
\begin{equation}
\begin{split}
    x^{(l)} & = \sigma\Big(\mathcal{A}^{(l)}\big(x^{(l-1)}\big)\Big)+ x^{(l-1)}, \\[6pt]
    \mathcal{A}^{(l)}\big(x^{(l-1)}\big) & = W^{(l)}x^{(l-1)}+b^{(l)}.
\end{split}
\end{equation}
where $\sigma$ acts component-wise; that is, $\sigma(x_1,\dots,x_d):=(\sigma(x_1),\dots,\sigma(x_d))$. Thus, a ResNet with $L$ hidden layers $\mathcal{F}:\mathbb{R}^d \longrightarrow \mathbb{R}$ is mathematically defined as
\begin{equation}\label{NNdef}
\mathcal{F}(x):=\mathcal{A}^{(L+1)}\circ\sigma\circ\mathcal{A}^{(L)}\circ\sigma\circ\mathcal{A}^{(L-1)}\circ\cdots\circ\sigma\circ\mathcal{A}^{(1)}(x).
\end{equation}
\end{definition}
Although the activation function can, in principle, be chosen arbitrarily, there are three functions that have particularly proven to be useful in various applications; ReLU, Sigmoid, and Tanh, see \cite{lecun2015deep,goodfellow2016deep}. The parameters of the network is all the weights and biases $\{W^{(l)}, b^{(l)}\}_{l=1}^{L+1}\in\mathbb{R}^{N_p}$, where  $N_p=\sum\limits_{l=1}^{L+1}(H_{l-1}+1)H_{l}$ is the total number of parameters.

\paragraph{Residual blocks with general activations.}
Let $\sigma:\mathbb{R}\to\mathbb{R}$ be a Lipschitz activation (e.g., ReLU, $\tanh$, GELU).
To make the residual path explicit and avoid ambiguity between (2) and (3), we write
\begin{align*}
    x^{(0)} & = x, \\[6pt]
    x^{(l+1)} & = \sigma\!\big(A^{(l)}(x^{(l)})\big) + x^{(l)}, 
    \qquad l = 0,\ldots,L-1.
\end{align*}

and the network output
\[
\mathcal{F}(x) = \mathcal{A}^{(L+1)}\!\big(x^{(L)}\big).
\]
These expressions preserve the skip connection for any choice of $\sigma$. Note that our algorithm treats the network as a black box; the presence of residual connections is not required by the method or by the analysis. In this case, we display the residual formulation to adopt notation that is standard in modern architectures and to align with the models used in our experiments, ResNet-based surrogates of Ackley and Drop–Wave. The simulated-annealing procedure and the theoretical development, however, treat $\mathcal{F}$ as a black box; the same analysis applies verbatim to generic feedforward DNNs with affine layers and a Lipschitz activation.

 \section{Output Range Analysis Problem}\label{problem} 
In the rest of this paper, we consider a ResNet $\mathcal{F}$ with inputs $x\in\mathbb{R}^d$ and output $y\in\mathbb{R}$ as defined \eqref{NNdef}. We assume that the network parameters of the neural network were optimally estimated following standard training procedures such as those shown in \cite{bishop2006pattern,murphy2012machine}.
\begin{definition}[Range Estimation Problem]\label{main-problem} The problem is defined as follows:
\begin{itemize}
    \item INPUTS: : A trained neural network (for example a ResNet $\mathcal{F}$) and input constraints $Ax\leq b$ which generates a feasible set $E$, i.e., $E=\{x\in\mathbb{R}^d: Ax\leq b\}$.

     \item OUTPUT: An interval $[\mathcal{F}_{min}, \mathcal{F}_{max}]$ such that $\mathcal{F}(x)\in [\mathcal{F}_{min}, \mathcal{F}_{max}]$, i.e., $[\mathcal{F}_{min}, \mathcal{F}_{max}]$ contains the range of $\mathcal{F}$ over inputs $x\in E$. Moreover, the optimal points $\{x_{min},x_{max}\}\in E$  such that $\mathcal{F}(x_{min})=\mathcal{F}_{min}$ and $\mathcal{F}(x_{max})=\mathcal{F}_{max}$.
\end{itemize}   
\end{definition}
\begin{remark}
Due to the applications, we focus just on hypercubes of $\mathbb{R}^d$ as feasible set, that is; sets of the type $E=[l_1, u_1]\times \cdots\times[l_d, u_d]\subset \mathbb{R}^d$, where $[l_j, u_j]$ are intervals in $\mathbb{R}$. Note that $E$ is the restricted search domain in which we want to diagnose and validate the analyzed neural network.   
\end{remark}

As mentioned above, this problem is particularly challenging given that there is generally almost total ignorance about the surfaces generated by a DNN. That is, there is no information on the functional form of the mapping $\mathcal{F}$. Therefore, there is no local geometric information such as the local slopes and curvatures of these surfaces, making the use of traditional nonlinear programming methodologies impossible. Furthermore, the surfaces generated by this type of neural network are strictly non-convex, with many minima. Therefore, it requires thinking about global optimization methods, methods such as SA for which it is necessary to incorporate the restricted search domain $E$.

\section{Simulated Annealing with Boundary Conditions}\label{SA}
Our objective in this session is to incorporate the domain $E$, coming from the range estimation problem, into the classical SA, and at the same time establish theoretical guarantees of the optimization process of the neural network $\mathcal{F}$. To this end, our range estimation problem can be summarized in an optimization problem of the form 
\begin{equation}
   \min_{x\in E}  \mathcal{F},
\end{equation}
where $\mathcal{F}$ is a ResNet and $E$ is a hypercubes of $\mathbb{R}^d$, which we denote by the pair $(\mathcal{F}, E)$. The  main problem now is to find a point $x_{\min}\in E$ such tha $\mathcal{F}(x_{\min})$ is global minimal on $E$. The $\max_{x\in E}  \mathcal{F}$ is equivalent to $\min_{x\in E}  -\mathcal{F}$, which is why we can, without loss of generality, only talk about “Minimization” throughout this paper. We denote by $M_{\mathcal{F}}=\{x_{\min}\in E:\mathcal{F}(x_{\min})\leq\mathcal{F}(x),\,\textrm{for all}\,\,x\in E\}$ the minimum set of $\mathcal{F}$. From Equation \eqref{NNdef}, we know that $\mathcal{F}$ is a continuous function, and added to the fact that $E$ is a compact set, we know that $M_{\mathcal{F}}$ is a non-empty set. For simplicity in notation we set $\mathcal{F}_{\min}=\mathcal{F}(x_{\min})$. Next, we will incorporate the search domain by establishing reflective band conditions on $E$.
\begin{definition}[Cyclic Reflection]
    Let $E=[l_1, u_1]\times \cdots\times[l_d, u_d]\subset \mathbb{R}^d$, where $[l_j, u_j]$ are intervals in $\mathbb{R}$. We denote by $\mathcal{R
    }:\mathbb{R}^d\longrightarrow E$, the different combinations of the reflections on $E$, that is, $\mathcal{R}(y)=\big(\mathcal{R}(y_1),\dots,\mathcal{R}(y_d) \big)$, where
    \begin{equation}
    \resizebox{.9\linewidth}{!}{$
\mathcal{R}(y_j) = 
\begin{cases}
l_j + \big[(y_j - l_j) \mod (u_j - l_j)\big], & \text{if } (y_j - l_j) \mod \big(2(u_j - l_j)\big) \leq u_j - l_j, \\[10pt]
u_j - \big[(y_j - l_j) \mod (u_j - l_j)\big], & \text{if } (y_j - l_j) \mod \big(2(u_j - l_j)\big) > u_j - l_j.
\end{cases}
$}
    \end{equation}
\end{definition}

\begin{remark}[Intuitive role of the reflection step]
The range estimation problem is inherently posed on bounded input domains—here, a hypercube $E$—and any
search scheme must therefore keep all iterates within $E$. We employ a \emph{cyclic reflection} that returns any out-of-box proposal to $E$ via the function $\mathcal{R}$, so that local exploration near the boundary $\partial E$ resembles that of interior points and boundary-driven bias is avoided. Crucially, this boundary mechanism is designed to
preserve the assumptions used in our theoretical analysis and ensure the asymptotic convergence of the simulated-annealing
chain to the set of global extrema on $E$, see Theorem \ref{mainresult} and Corollary \ref{Corollary}. Compared with the general-state-space framework of \cite{haario1991simulated}, which is implicitly compatible with boundary handling but does not instantiate an explicit, algorithmic reflection tailored to box-constrained output-range estimation, our contribution makes this reflection function explicit and operational for the present objective.
\end{remark}

\begin{definition}[Generating Distribution]\label{Q}
 We say that ${Q}$ is a generating distribution, with Gaussian density function $\rho:E\times\mathbb{R}^d \longrightarrow \mathbb{R}^+$ and with reflective boundary conditions on $E$, if ${Q}$ is defined as
 \begin{equation}
     {Q}(x,B):=\int\limits_{\{y\in \mathbb{R}^d: \mathcal{R}(y)\in B\}}  \rho(x,y)\,\mathrm{d}y,
 \end{equation}
where
 \begin{equation}
    \rho(x,y)=\frac{1}{(2 \pi \sigma)^{d/2}} \exp \left( -\frac{\|x - y\|^2}{2\sigma} \right)
 \end{equation}
for $x\in E$, $B \subset\mathfrak{B}(E)$ and $y\in \mathbb{R}^d$. Here $\sigma$ is a positive parameter and $\mathfrak{B}(E)$ denote the Borel $\sigma$ -algebra on the state space $E$.
\end{definition}

\begin{definition}[Acceptance Probability]
  Given $\mathcal{F}$ and a number $T>0$, the acceptance probability $q_{T}:E\times E \longrightarrow \mathbb{R}^+$ is defined as 
  \begin{equation}
q_{T}(x, y) = e^{\frac{1}{T} \min\{0, \mathcal{F}(x) - \mathcal{F}(y)\}},
\end{equation}
where $T$ is called temperature parameter.
\end{definition}

\begin{definition}[Simulated Annealing Process]\label{X}
    Let $(\mathcal{F}, E)$ be a global minimization problem, $(Q_i)_{i\in\mathbb{N}}$ be a sequence of generating distributions with reflective boundary conditions, $\{T_i\}_{i\in\mathbb{N}}\downarrow 0$ be a sequence of temperature parameters, and $(q_{T_i})_{i\in\mathbb{N}}$ be a sequence of acceptance probabilities. A simulated annealing process with reflective boundary conditions on $E$ is the non-homogeneous Markov process $(X_i)_{i\in\mathbb{N}}$, defined on a probability space $(\Omega, \mathcal{A}, \mathbb{P})$, with state-space $(E,\mathfrak{B})$ and transition kernel $(P_{i})_{i\in\mathbb{N}}$ defined by
   {\begin{equation}\label{Kernel}
   \resizebox{.9\linewidth}{!}{$
P_{i}(x,B) =
\begin{cases}
\int\limits_{ B} q_{T_i}(x,y) \, Q_i(x,y)\, \mathrm{d}y & \text{for } x \notin B, \\
\int\limits_{ B} q_{T_i}(x,y) \,Q_i(x,y)\, \mathrm{d}y + \Big(1 - \int\limits_{E} q_{T_i}(x,y) \,Q_i(x,y)\, \mathrm{d}y \Big) & \text{for } x \in B,
\end{cases}
$}
\end{equation}}
where $x\in E$ and $B \subset\mathfrak{B}$.
\end{definition}

{This definition aligns with the framework established in Definition 2.4 of Haario and Saksman \cite{haario1991simulated}, which introduces the simulated annealing process in a general state space. Their work provides a rigorous foundation for the transition kernels and the Markov process in this context.}

Given the transition kernel in \eqref{Kernel}, we defined the distributions of Markov process $(X_i)_{i\in\mathbb{N}}$ as $\mu_i(\mathrm{d}x):=\mathbb{P}(X_i\in \mathrm{d}x)$. Consequently, we have that $\mu_i=\mu_{i-1}P_{i}$ for $i\geq 1$, where $\mu_0$ is an arbitrary initialization distribution on $E$.

\begin{theorem}\label{mainresult} Given the minimization problem  $(\mathcal{F}, E)$, let $(X_i)_{i\in\mathbb{N}}$ a simulated annealing process with reflective boundary conditions on $E$. Then, for each $i\in \mathbb{N}$, the operator $P_i$ has the equilibrium distribution $\pi_i$ given by
\begin{equation}\label{resul1}
\pi_i(y) = C_i \exp \left(-\,\frac{\mathcal{F}(y)-\mathcal{F}_{\min}}{T_i} \right),
\end{equation}
where $C_i$ is the normalization constant and $\mathcal{F}_{\min}\in M_{\mathcal{F}}$. Moreover, if we set $T_i=T_0\,\delta^i$, for some $|\delta|<1$ and $T_0$ arbitrary, then we have that
\begin{equation}\label{resul2}
\lim_{i\to \infty}||\mu_i-\pi_i||_{TV}=0,
\end{equation}  
where $||\cdot||_{TV}$ is the total variation norm.
\end{theorem}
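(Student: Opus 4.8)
\emph{Part 1 (identification of the equilibrium distribution).} I would prove that $\pi_i$ is stationary for $P_i$ by establishing the stronger reversibility (detailed balance) relation $\pi_i(\mathrm{d}x)\,P_i(x,\mathrm{d}y)=\pi_i(\mathrm{d}y)\,P_i(y,\mathrm{d}x)$ as measures on $E\times E$. First note that $\pi_i$ is a genuine probability density on $E$: since $\mathcal{F}$ is continuous on the compact set $E$, $0<\exp(-(\mathcal{F}(y)-\mathcal{F}_{\min})/T_i)\le 1$ is integrable and the normalizer $C_i$ is well defined. Writing $Q_i(x,\mathrm{d}y)=\tilde q_i(x,y)\,\mathrm{d}y$, where $\tilde q_i(x,y)=\sum_{z:\mathcal{R}(z)=y}\rho(x,z)$ is the (convergent) ``folded Gaussian'' obtained after the cyclic reflection, the kernel $P_i$ has the usual Metropolis structure: an absolutely continuous off-diagonal part with density $q_{T_i}(x,y)\tilde q_i(x,y)$ plus a rejection atom on the diagonal. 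The diagonal part is automatically symmetric, so detailed balance reduces to the pointwise identity $\pi_i(x)\,q_{T_i}(x,y)\,\tilde q_i(x,y)=\pi_i(y)\,q_{T_i}(y,x)\,\tilde q_i(y,x)$ for $x\ne y$. Granting the symmetry $\tilde q_i(x,y)=\tilde q_i(y,x)$ (next paragraph), this is the classical Metropolis cancellation: inserting $\pi_i(x)=C_i\exp(-(\mathcal{F}(x)-\mathcal{F}_{\min})/T_i)$ and $q_{T_i}(x,y)=\exp(\tfrac{1}{T_i}\min\{0,\mathcal{F}(x)-\mathcal{F}(y)\})$, both sides equal $C_i\,\tilde q_i(x,y)\exp\!\big(\tfrac{1}{T_i}(\mathcal{F}_{\min}-\max\{\mathcal{F}(x),\mathcal{F}(y)\})\big)$. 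Integrating the detailed balance relation over $x$ gives $\pi_iP_i=\pi_i$, i.e.\ $\pi_i$ is the equilibrium distribution.

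\emph{The symmetry of the folded generating density.} The one genuinely new point, coming from the reflective boundary, is that cyclic reflection preserves symmetry: $\tilde q_i(x,y)=\tilde q_i(y,x)$ for $x,y\in E$. I would argue via the group $G$ generated by the coordinate-wise reflections in the faces of the hypercube $E$ (a product of infinite dihedral groups): each $g\in G$ acts on $\mathbb{R}^d$ as a Euclidean isometry, $E$ is a fundamental domain, and for $y$ in the interior of $E$ the fibre is the orbit, $\{z\in\mathbb{R}^d:\mathcal{R}(z)=y\}=Gy$. Hence $\tilde q_i(x,y)=\sum_{g\in G}\rho(x,gy)$, and using that $\rho$ depends only on $\|x-y\|$ (so $\rho(gx,gy)=\rho(x,y)=\rho(y,x)$) together with $G=G^{-1}$,
\[
\tilde q_i(y,x)=\sum_{g\in G}\rho(y,gx)=\sum_{g\in G}\rho(g^{-1}y,x)=\sum_{g\in G}\rho(x,g^{-1}y)=\tilde q_i(x,y).
\]
Boundary points of $E$ form a Lebesgue-null set and may be ignored.

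\emph{Part 2 (convergence in total variation).} I would use the perturbation-of-Markov-kernels scheme. Because $E$ is compact, $\mathcal{F}$ is bounded on $E$, and the folded Gaussian densities are bounded below by a constant $c>0$ on $E\times E$ (uniformly in $i$), the kernel $P_i$ satisfies a uniform Doeblin minorization $P_i(x,\cdot)\ge\alpha_i\,\nu(\cdot)$ with $\nu$ the uniform law on $E$ and $\alpha_i\ge\kappa\exp(-\mathrm{osc}_E(\mathcal{F})/T_i)$, where $\mathrm{osc}_E(\mathcal{F})=\max_E\mathcal{F}-\min_E\mathcal{F}$ and $\kappa>0$. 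This yields the Dobrushin contraction $\|\lambda P_i\|_{TV}\le(1-\alpha_i)\|\lambda\|_{TV}$ for every signed measure $\lambda$ of total mass zero. Combining it with the stationarity $\pi_i=\pi_iP_i$ from Part 1 and the decomposition $\mu_i-\pi_i=(\mu_{i-1}-\pi_{i-1})P_i+(\pi_{i-1}-\pi_i)P_i$ gives the recursion
\[
\|\mu_i-\pi_i\|_{TV}\le(1-\alpha_i)\,\|\mu_{i-1}-\pi_{i-1}\|_{TV}+\|\pi_{i-1}-\pi_i\|_{TV}.
\]
It then remains to estimate the drift term $\|\pi_{i-1}-\pi_i\|_{TV}$ directly from the explicit Gibbs form of $\pi_i$ and the fixed ratio $T_{i-1}/T_i=\delta^{-1}$, and to feed this into a standard lemma on recursions $d_i\le(1-\alpha_i)d_{i-1}+e_i$ to conclude $d_i\to0$; this is where the schedule $T_i=T_0\delta^i$ enters, and where I would rely on the convergence framework of Haario and Saksman \cite{haario1991simulated}.

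\emph{Main obstacle.} The delicate step is this last one. As the temperature cools, $\alpha_i$ decays like $\exp(-\mathrm{osc}_E(\mathcal{F})/T_i)$, so the contraction factors $(1-\alpha_i)$ degrade quickly, and extracting $\|\mu_i-\pi_i\|_{TV}\to0$ from the recursion requires a careful asymptotic balance between the divergence of $\sum_i\alpha_i$ and the smallness of the perturbations $\|\pi_{i-1}-\pi_i\|_{TV}$ (morally, one wants $\|\pi_{i-1}-\pi_i\|_{TV}=o(\alpha_i)$ while $\sum_i\alpha_i=\infty$). Verifying that the geometric schedule $T_i=T_0\delta^i$ satisfies the hypotheses of the Haario--Saksman theorem — and, if needed, pinning down the admissible values of $\delta$ and of the Gaussian scale $\sigma$ relative to $\mathrm{osc}_E(\mathcal{F})$ — is the crux of the argument; the rest is bookkeeping.
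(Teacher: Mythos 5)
Your Part 1 is correct and is, in substance, a fleshed-out version of what the paper disposes of by citation: the paper's proof consists of asserting that $Q$ is symmetric ``by construction,'' noting that $\mathcal{F}$ is continuous on the compact set $E$, and then invoking Theorems 5.1 and 6.5 of Haario and Saksman \cite{haario1991simulated}. Your reflection-group argument for the symmetry of the folded density $\tilde q_i$, and the Metropolis detailed-balance cancellation giving $\pi_i P_i=\pi_i$, supply exactly the details hidden behind the paper's one-line symmetry claim and its appeal to their Theorem 5.1. This part of your proposal is sound and arguably more informative than the paper's own text.

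Part 2 is where the gap lies, and it is a genuine one. The paper proves \eqref{resul2} purely by citing Haario--Saksman Theorem 6.5; you instead sketch a Doeblin-minorization/Dobrushin-contraction recursion, and your own bookkeeping exposes the problem: with the geometric schedule $T_i=T_0\delta^i$ the minorization constants behave like $\alpha_i\asymp\exp(-\mathrm{osc}_E(\mathcal{F})/T_i)$, which decays doubly exponentially, so $\sum_i\alpha_i<\infty$ and the balance you say is needed ($\sum_i\alpha_i=\infty$ together with $\|\pi_{i-1}-\pi_i\|_{TV}=o(\alpha_i)$) cannot hold. As written, your recursion therefore cannot close, and you explicitly defer the crux --- checking that the geometric cooling schedule satisfies the hypotheses of Haario--Saksman's convergence theorem --- without carrying it out. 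So as a self-contained argument the proposal does not establish \eqref{resul2}; it reduces to the same external theorem the paper leans on, but unlike the paper it makes visible that the admissibility of $T_i=T_0\delta^i$ (and any conditions on $\delta$, $\sigma$) is a nontrivial verification that still has to be done. To complete the proof along your route you would need either to verify those hypotheses explicitly for this kernel and schedule, or to replace the contraction bound by whatever finer estimate Haario and Saksman actually use; simply noting the obstacle, as you do, leaves the second half of Theorem \ref{mainresult} unproved.
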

\begin{remark}The total variation distance between two probability measures \(P\) and \(Q\) defined on a measurable space \((\Omega, \mathcal{F})\) is given by
\[
||P-Q||_{TV} = \sup_{A \in \mathcal{F}} |P(A) - Q(A)|.
\]
This measures the largest absolute difference between the probabilities that the two distributions assign to the same event. In the context of our problem, we apply the total variation norm to measure the difference in the distributions of the output range across iterations.
\end{remark}

\begin{proof}
    By construction, ${Q}$ is symmetric, that is
    \begin{equation}
        \int\limits_{B} {Q}(x,y)\,\mathrm{d}y= \int\limits_{B} {Q}(y,x)\,\mathrm{d}x,
    \end{equation}
    for $B \subset\mathfrak{B}$. Furthermore, $\mathcal{F}$ is a continuous function, and added to the fact that $E$ is a compact set, then $M_{\mathcal{F}}$ is a non-empty set and $\mathcal{F}:E \longrightarrow \mathbb{R}$ is uniform continuous. Given these facts, the results in \eqref{resul1} and \eqref{resul2} are direct applications of Theorem 5.1 and Theorem 6.5 in \cite{haario1991simulated}.
\end{proof}

\begin{corollary}\label{Corollary}
    Suppose that the conditions of Theorem \ref{mainresult} are fulfilled, then 
    \begin{equation}
        \mathcal{F}(X_i)\longrightarrow \mathcal{F}_{\min}\quad \textrm{as}\quad i\longrightarrow \infty\quad \textrm{in probability},
    \end{equation}
    for some $\mathcal{F}_{\min}\in M_{\mathcal{F}}$.
\end{corollary}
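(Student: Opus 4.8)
The plan is to deduce the corollary directly from Theorem \ref{mainresult} in two stages: first show that the equilibrium Gibbs measures $\pi_i$ given by \eqref{resul1} concentrate on the minimum set $M_{\mathcal{F}}$ as the temperatures $T_i \downarrow 0$, and then transfer this concentration to the laws $\mu_i$ of $X_i$ using the total variation convergence \eqref{resul2}.

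For the first stage, fix $\varepsilon > 0$ and set $A_\varepsilon := \{x \in E : \mathcal{F}(x) - \mathcal{F}_{\min} \geq \varepsilon\}$. Using the explicit form of $\pi_i$, I would bound the numerator of $\pi_i(A_\varepsilon)$ from above by $\mathrm{Leb}(E)\, e^{-\varepsilon/T_i}$, and bound the reciprocal normalizing constant $C_i^{-1} = \int_E \exp(-(\mathcal{F}(y)-\mathcal{F}_{\min})/T_i)\,\mathrm{d}y$ from below by restricting the integral to the sublevel set $S := \{x \in E : \mathcal{F}(x) - \mathcal{F}_{\min} \leq \varepsilon/2\}$, which gives $C_i^{-1} \geq \mathrm{Leb}(S)\, e^{-\varepsilon/(2 T_i)}$. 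Combining the two estimates yields $\pi_i(A_\varepsilon) \leq \big(\mathrm{Leb}(E)/\mathrm{Leb}(S)\big)\, e^{-\varepsilon/(2 T_i)}$, which tends to $0$ since $T_i = T_0\,\delta^i \downarrow 0$. The only delicate point — and the step I expect to be the main obstacle — is checking that $\mathrm{Leb}(S) > 0$: since $E$ is a hypercube it has nonempty interior (and even at a corner or face point a small ball meets $E$ in a set of positive Lebesgue measure), and $\mathcal{F}$ is continuous, so any minimizer $x_{\min} \in M_{\mathcal{F}}$ has a relative neighborhood in $E$ of positive measure on which $\mathcal{F} < \mathcal{F}_{\min} + \varepsilon/2$; continuity of $\mathcal{F}$ at $x_{\min}$ delivers this.

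For the second stage, the inequality $|\mu_i(A_\varepsilon) - \pi_i(A_\varepsilon)| \leq \|\mu_i - \pi_i\|_{TV}$ gives $\mu_i(A_\varepsilon) \leq \pi_i(A_\varepsilon) + \|\mu_i - \pi_i\|_{TV} \to 0$ by the first stage together with \eqref{resul2}. Finally, since $X_i \in E$ almost surely and $\mathcal{F}_{\min}$ is the global minimum of $\mathcal{F}$ over $E$, we have $\mathcal{F}(X_i) \geq \mathcal{F}_{\min}$ almost surely, so $\mathbb{P}\big(|\mathcal{F}(X_i) - \mathcal{F}_{\min}| \geq \varepsilon\big) = \mathbb{P}\big(\mathcal{F}(X_i) - \mathcal{F}_{\min} \geq \varepsilon\big) = \mu_i(A_\varepsilon) \to 0$. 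As $\varepsilon > 0$ was arbitrary, this is exactly the assertion that $\mathcal{F}(X_i) \to \mathcal{F}_{\min}$ in probability, with $\mathcal{F}_{\min} \in M_{\mathcal{F}}$ the value fixed in Theorem \ref{mainresult}.
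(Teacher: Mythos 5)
Your proof is correct, and it is genuinely more self-contained than what the paper does: the paper simply defers to Corollary 5.4 of Haario and Saksman, stating that the result follows by a ``slight modification'' of that corollary, without exhibiting the argument. Your route makes that hidden content explicit in two clean steps: a quantitative concentration estimate for the Gibbs measures, $\pi_i(A_\varepsilon)\leq \big(\mathrm{Leb}(E)/\mathrm{Leb}(S)\big)e^{-\varepsilon/(2T_i)}\to 0$, whose only nontrivial ingredient is $\mathrm{Leb}(S)>0$ (which you correctly obtain from continuity of $\mathcal{F}$ and the fact that every point of the hypercube $E$, even on its boundary, has a relative neighborhood of positive Lebesgue measure); and then the transfer $\mu_i(A_\varepsilon)\leq \pi_i(A_\varepsilon)+\|\mu_i-\pi_i\|_{TV}\to 0$ via \eqref{resul2}, combined with the one-sided observation $\mathcal{F}(X_i)\geq \mathcal{F}_{\min}$ a.s.\ so that only the upper tail needs control. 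This is essentially the standard mechanism underlying the cited Haario--Saksman corollary, so conceptually the two proofs coincide, but yours buys a verifiable, elementary argument tailored to the paper's specific setting (compact hypercube, continuous ResNet output, geometric cooling $T_i=T_0\delta^i\downarrow 0$), whereas the paper's citation-only proof buys brevity at the cost of leaving the reader to check that the external theorem's hypotheses really match the reflected-boundary construction used here. The only cosmetic caveat is the usual factor-of-two ambiguity in the definition of $\|\cdot\|_{TV}$, which does not affect your conclusion.
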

\begin{proof}
The proof is a slight modification of  Corollary 5.4 in \cite{haario1991simulated}.
\end{proof}
As will be seen later, Corollary \ref{Corollary} is the mechanism that establishes the theoretical guarantees that our algorithm converges to the global minimum point.

\section{Algorithm Derivation}\label{Derivation}
From Theorem \ref{mainresult} and its Corollary \ref{Corollary}, a simple algorithm can easily be derived to solve the range estimation problem $(\mathcal{F}, E)$. The main idea is to generate a Markov process $(X_i)_{i\in\mathbb{N}}$ taking values on $E$, as described in Definition \ref{X}, for which its initial state is selected using an arbitrary measure $\mu_{0}$. The following states of the Markov process are selected according to two stages; In the first stage, a new state generated by $Q$ is proposed, and subsequently in the second stage, it is decided to accept or reject that new state according to the probability of acceptance $q$. If the proposed state is outside the domain $E$ we use reflection $\mathcal{R}$ to exchange it for another equally probable state that is inside the domain $E$. It is important to note that the generation of new proposed states using $Q$ is done through $\rho$. Both stages, proposing and accepting new states, are executed repeatedly $N$ times for each temperature level $T$. Theorem \ref{mainresult} guarantees that if we execute this two-stage local recursive process the Markov process $(X_i)_{i\in\mathbb{N}}$ will converge towards the minimal state corresponding to the value $\mathcal{F }_{\min}$, this occurs for each set temperature level. Finally, Corollary \ref{Corollary} guarantees that if we execute the stages described above for a sequence of temperatures $\{T_i\}_{i\in\mathbb{N}}$ that slowly decreases towards zero, then the Markov process $(X_i)_{i\in\mathbb{N}}$ will converge to the minimal state corresponding to the global minimum that solves our estimation problem. We summarize the entire procedure described above in pseudo-code format in Algorithm~\ref{alg:algorithm}. 

\paragraph{Convergence and cooling schedule.}
The theoretical result of the previous section establishes {asymptotic convergence} under the stated assumptions, compact box $E$, continuity of $\mathcal{F}$, and reflective update via the function $\mathcal{R}$. We do {not} derive finite-time rates for two complementary reasons. First, in simulated annealing such rates depend on {landscape features} of $\mathcal{F}$ that, in our black-box setting, are neither observable nor reliably estimable without imposing strong assumptions or accessing internal model information; forcing those assumptions would substantially broaden the scope without adding direct operational value to output-range estimation. Second, the aim of the paper is operational: to provide an effective procedure with asymptotic guarantees and clear usage guidance. In practice, the community often adopts \emph{geometric cooling} $T_{k+1}=\delta\,T_k$ as a pragmatic compromise that accelerates progress empirically, while classical global guarantees are associated with \emph{logarithmic} schedules, see, e.g., \cite{Hajek1988}. The practical guidelines we follow—choosing $\delta$ and tuning the proposal scale to maintain a moderate acceptance rate—are consistent with the classical SA literature and empirical comparisons \cite{LaarhovenAarts1987,AartsKorst1989,NouraniAndresen1998,LundyMees1986}. Implementation choices (target acceptance range, proposal scaling, and stopping criterion) are documented in the experimental evaluations section, in line with this framework.

\begin{algorithm}
\caption{Minimum Output Search for a Neural Network}
\label{alg:algorithm}
\begin{algorithmic}[1]
\Statex \textbf{Input:} The minimization problem $(\mathcal{F}, E)$
\Statex \textbf{Initialize} $i \gets 0$, $x\sim\mu_0$, $T_0 \gets T_{\max}$ and  $\mathcal{F_{\min}} \gets \mathcal{F}(x)$
\Statex \textbf{Output}: $\mathcal{F}_{\min}$
\While{$T_i>T_{\min}$}
    \For{$k = 1$ to $N$}
        \State Generate $y \sim \rho(x,y)$ 
        \State Reflect $y\gets \mathcal{R}(y)$ 
        \State Set $\Delta\mathcal{F}\gets \mathcal{F}(y)-\mathcal{F}(x)$
        
                     \If{$\Delta\mathcal{F} < 0$} 
    \State $q(x, y) = 1.0$
\Else
   \State $q(x, y) =\min\Big\{1, e^{-\frac{\Delta\mathcal{F}}{T_i}}\Big\}$
\EndIf 
   \State Sample $U\sim \textrm{Unif}\,(0,1)$ 
    \If{$U\leq q(x, y)$}
        \State Update $x \gets y$ 
    \EndIf

    \If{$\mathcal{F}(x) < \mathcal{F}_{\min}$}
        \State $\mathcal{F}_{\min}\gets \mathcal{F}(x)$ 
    \EndIf
        
    \EndFor
    \State Update $T_i=T_{i-1}\,\delta^i$
    \State Update $i \gets i+1$
    
\EndWhile

\State \textbf{Output:} $\mathcal{F}_{\min}$
\end{algorithmic}
\end{algorithm}

\section{Experimental Evaluations}\label{Experimentacion}

Based on the above constructions, our algorithm can be used to estimate the maxima and minima assumed by neural networks that were trained using an available database. It is important to mention that the database provided for training the neural network does not necessarily contain the maxima and minima assumed by this neural network. These maxima and minima correspond to the optimal points of the continuous response surface generated by the neural network that arises from the adjustment of the discrete observations contained in the training database. The response surface generated by a neural network can be viewed as a smoothing of the data. But, as mentioned above, we do not know the explicit form of the function graph that corresponds to the response surface generated by the trained neural network. Therefore, our strategy to empirically evaluate our algorithm consists in proposing some explicit functional forms for which we know their optimal points, in this case their minimum points. We will use these explicit functions, for which we have all possible geometric information, including their minimum points, to generate a noisy discrete sample through uniform sampling in its domain. These discretized samples are used as training data for a ResNet. The main idea of our experimental evaluation strategy is that by applying this algorithm to the ResNet, trained with the discretized data, we can approximate the true global minima of the functions that were used to generate these discretized samples.

Following the empirical evaluation scheme proposed above, and to illustrate the use and operation of our algorithm, we will use a couple of commonly used functions for testing optimization algorithms. This functions are: Ackley Function, Drop-Wave Function and a function with multiple global minima.

\subsection{The Ackley Function} In mathematical optimization, the Ackley function is a non-convex function, with many local minima, which is often used as a performance test problem for optimization algorithms.This function on a 2-dimensional domain is defined by
\begin{equation}\label{AckleyFunction}
\resizebox{\linewidth}{!}{$
   f(x_1, x_2) = -20 \exp \left( -0.2 \sqrt{0.5(x_1^2 + x_2^2)} \right) - \exp \left( 0.5 (\cos 2\pi x_1 + \cos 2\pi x_2) \right) + e + 20,
   $}
\end{equation}
for which its global minimum point is $f(0,0)=0$, see Figure \ref{smooth-Ackley}. We use this function to obtain a noisy discretized sample $\mathcal{D}=\{(x_{1i},x_{2i},f_{i})\}_{i=1}^{m}\subset E$, where $E=[-4,4]\times[-4,4]$, see \ref{smooth-Ackley}. Furthermore, suppose we want to estimate the function \eqref{AckleyFunction} using only the discretized sample of points $\mathcal{D}$. To this end, we use a ResNet as described in the chapter ``Residual Neural Networks,'' with a total depth of $L=5$ residual blocks. The input layer maps a 2-dimensional input to $H_0=128$ neurons. The network contains 4 intermediate residual blocks, each with $H_l=256$ neurons for $l=1, \dots, 4$, followed by a fifth residual block that reduces the dimensionality back to $H_5=128$ neurons before the output layer. The output layer maps the final $128$ neurons to a single output. All layers in the network use the ReLU activation function. The model was initialized with the following setup: Mean Squared Error (MSE) was employed as the loss function, and the Adam optimizer \cite{kingma2017adam} was used smooth surface generated by ResNet for training with a learning rate of 0.001. The model was trained for 1000 epochs. The fit generated by these ResNet is quite good, resulting in a mean absolute error of $1.0070$ and a mean squared error of $3.0298$ from $1000$ uniform random points generated within the range $[-5,5]\times[-5,5]$. The surface generated by the neural network maintains all the properties of the original function, including its non-convex nature and the existence of many local minima, see Figure \ref{smooth-Ackley}. As mentioned above, we will use this generated response surface to find the global minimum of the Ackley function.

\begin{figure}[h]
    \centering
    \includegraphics[width=0.3\textwidth]{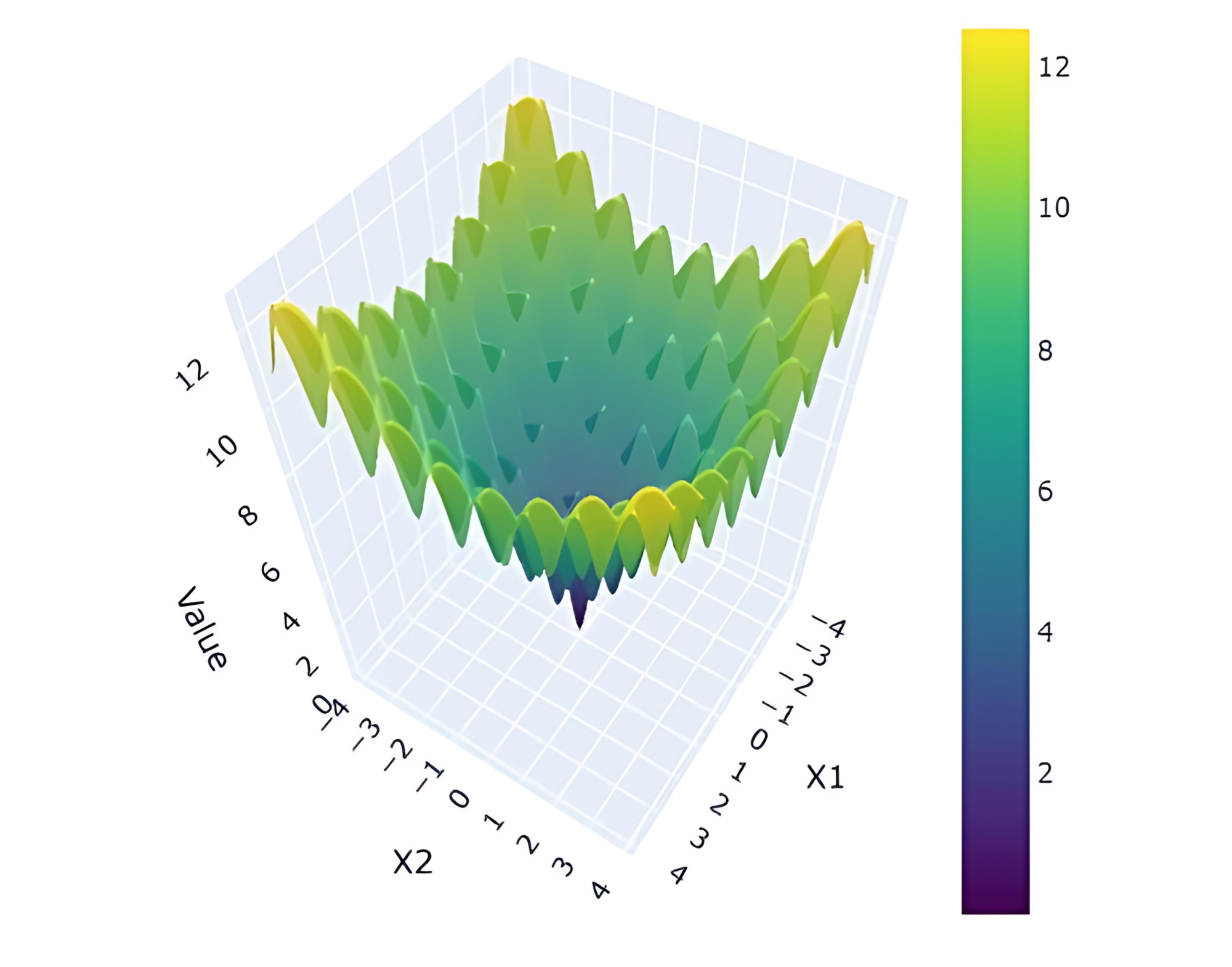}
    \subcaption{Ackley function in two dimensions.}

    \vspace{0.5cm}

    \includegraphics[width=0.3\textwidth]{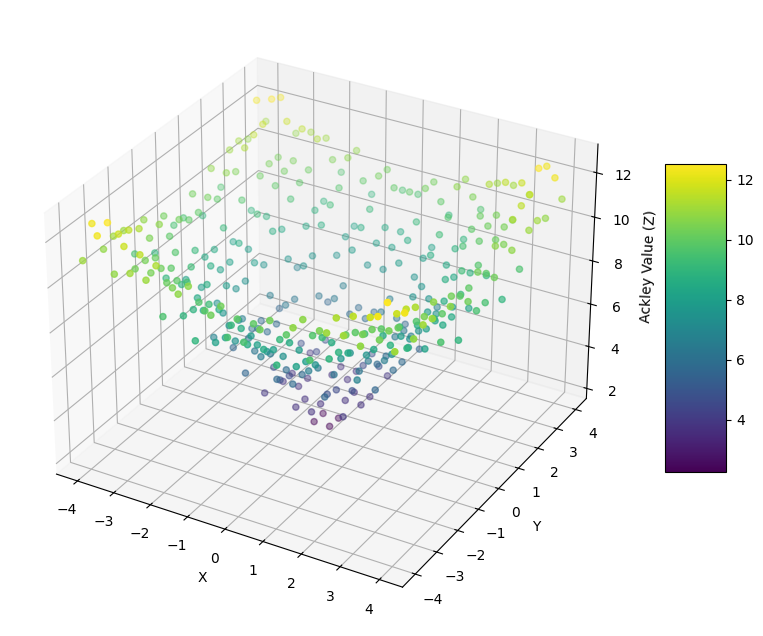}
    \subcaption{Discretized sample from the Ackley function}
    \hspace{0.5cm}
    \includegraphics[width=0.3\textwidth]{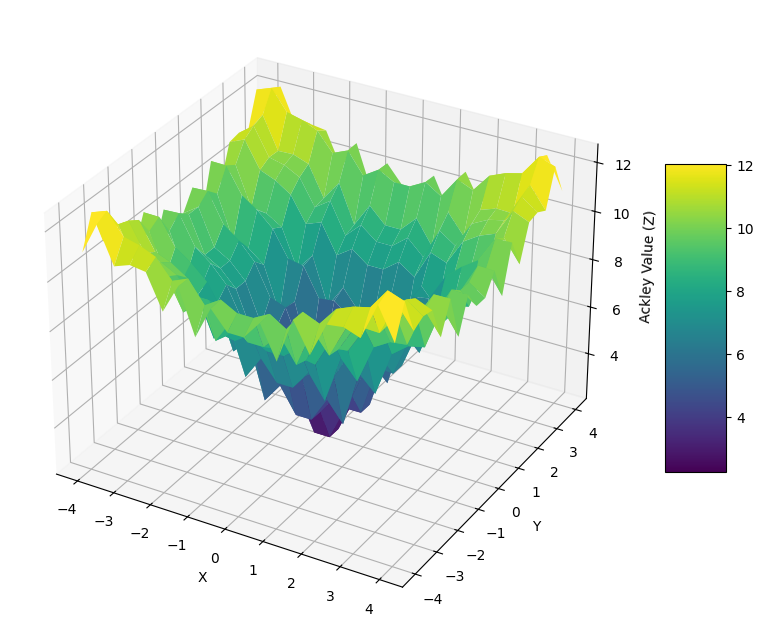}
    \subcaption{Response surface generated (smoothing) from the ResNet}

    \caption{Graphs of the experimental evaluation using the Ackley function.}
    \label{smooth-Ackley}
\end{figure}

The fact that the generated surface consists of too many local minima constitutes a suitable scenario where our methodology shows its greatest strength. Let us remember that when working with deep neural networks, this scenario is quite recurrent due to the high non-linearity and flexibility of the relationships established by this type of neural network. After training the ResNet $\mathcal{F}$, our main objective is to find the global minimum on the surface generated with the neural network within the domain $E$, denoted as $(E, \mathcal{F}(E))$. To this end, we apply our Algorithm ~\ref{alg:algorithm} to this case. In figure \ref{fig03}, we present the evolution of the values assumed by the Markov process generated through the algorithm and its tendency. We note the reasonably fast convergence towards the global minimum $\mathcal{F}_{\min}=0$. While it is true that there are fluctuations around the trajectory towards convergence, this is due to the highly fluctuating nature of the generated surface $(E, \mathcal{F}(E))$, but there is also a clear and consistent trend towards the global minimum $0$.

\begin{figure}[h]
    \centering
    \includegraphics[width=0.4\linewidth]{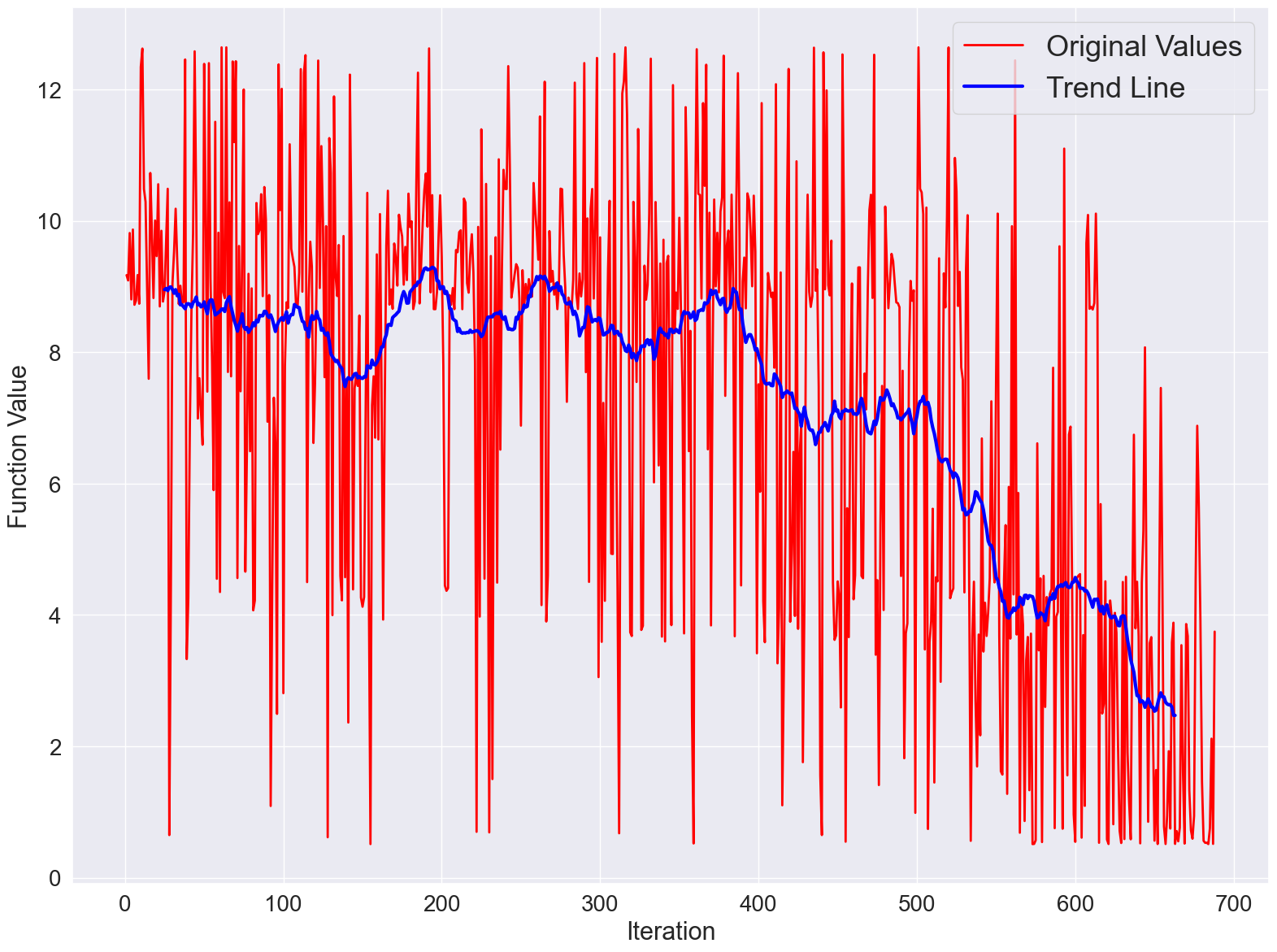}
    \caption{Red line: Sample path of Markov process 
    $(X_i)_{i\in\mathbb{N}}$ for the Ackley function. Blue line: fitted trend line.}
    \label{fig03}
\end{figure}

\subsubsection{Differences with the Classical Simulated Annealing Algorithm} 
As mentioned above, the fundamental difference between our proposed algorithm and the classical Simulated Annealing algorithm is the incorporation of a restricted search domain. This domain restriction is a priority need present in applications within the area of output range analysis for neural networks. As shown in Section \ref{SA}, we successfully incorporate bounded domains into the algorithm and further establish theoretical properties that guarantee the convergence of the algorithm to optimal points. In this subsection, ignoring the priority need for domain restrictions, for performance comparison purposes only, we apply the classical SA algorithm with a scheme analogous to Algorithm \ref{alg:algorithm} but without the incorporation of bounded domains. In Figure \ref{fig07} shows the extensive range and unbounded path traced by the classic Simulated Annealing algorithm. In iteration 125 a maximum value, $131.21$, is found to subsequently fluctuate to converge to the minimum. Therefore, the non-incorporation of a restricted domain causes the algorithm to experience large fluctuations before converging to the optimal points. But, as we mentioned before, this comparison is only didactic, since restricted domains are part of the nature of the problem. In many cases, as part of the diagnosis of the trained neural network, it is necessary to evaluate the optimal points assumed by the neural network in certain previously defined limited domains.

\begin{figure}[h]
    \centering
    \includegraphics[width=0.4\linewidth]{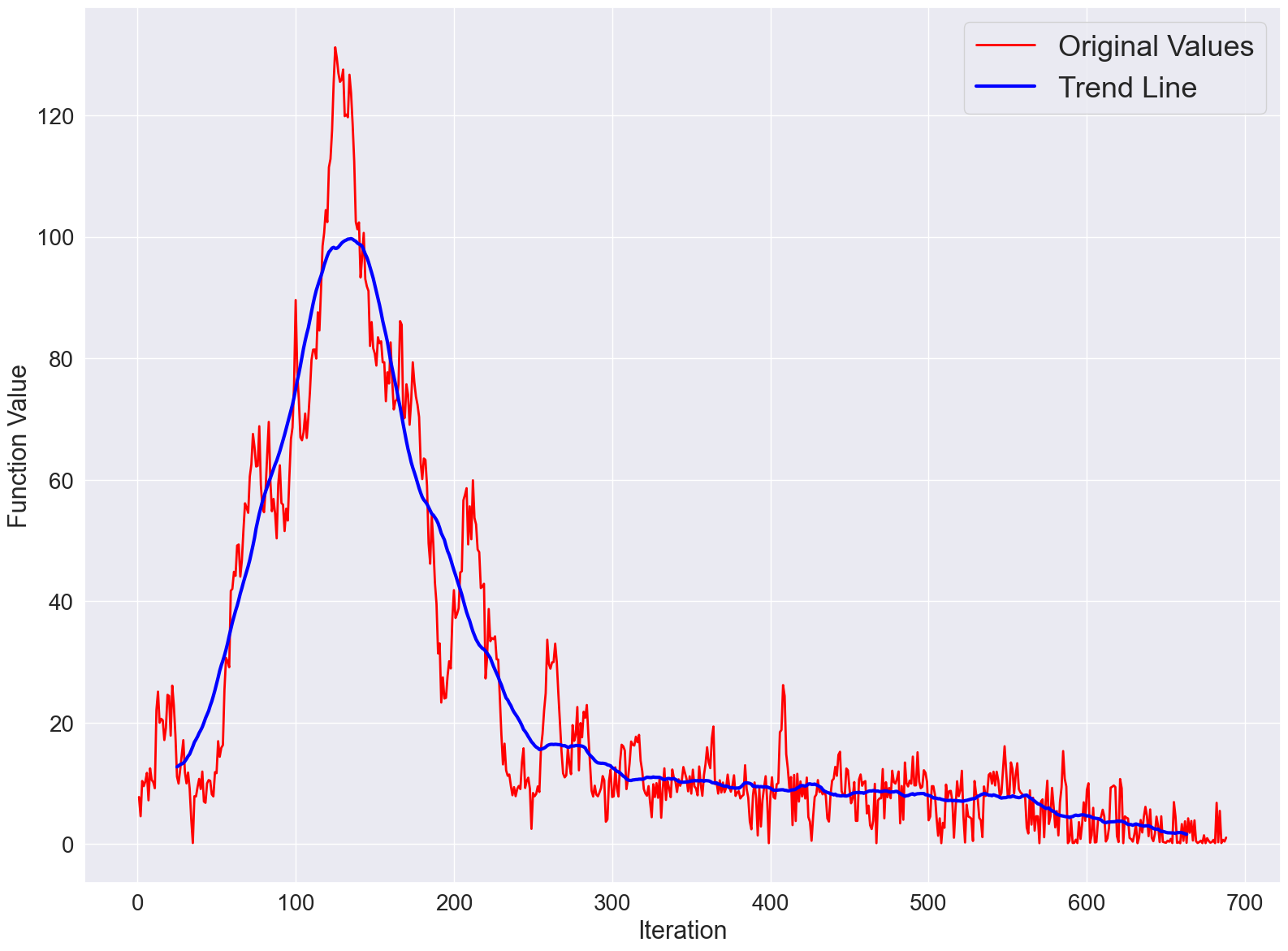}
    \caption{Red line: Sample path of the classical Simulated Annealing algorithm for the Ackley Function. Blue line: fitted trend line.}
    \label{fig07}
\end{figure}

\subsection{The Drop-Wave Function} 

This function is characterized by being multimodal, with many local minima, and highly complex; it is also non-convex. Like the previous function, it has a 2-dimensional domain and is defined by 
\begin{equation}\label{DropWaveFunction}
     f(x_1, x_2) = -\frac{1 + \cos\left(12 \sqrt{x_1^2 + x_2^2}\right)}{0.5(x_1^2 + x_2^2) + 2},
\end{equation}
for which its global minimum point is $f(0,0)=-1$, see Figure \ref{fig04}. In particular, the function is usually evaluated on the square $E=[-5.12, 5.12]\times[-5.12,5.12]$. Moreover, suppose we want to estimate the function \eqref{DropWaveFunction} using only a discretized sample of points $\mathcal{D}=\{(x_{1i},x_{2i},f_{i})\}_{i=1}^{m}\subset E$. For this experiment, we use a total depth of $L=7$ residual blocks. The input layer maps a 2-dimensional input to $H_0=128$ neurons. The network contains $L=7$ residual blocks with the following dimensions: $H_1 = H_2 = 256$, $H_3 = H_4 = H_5 = 512$, $H_6 = 256$, and $H_7 = 128$. Finally, The output layer has $128$ neurons. All layers in the network use the ReLU activation function
  The model was configured with the following setup: The loss function utilized was Mean Squared Error (MSE). Training was performed using the Adam optimizer \cite{kingma2017adam} with a learning rate of $0.001$ and was performed over 1000 epochs. The fit generated by these ResNets is strong, with a mean absolute error of $0.0221$ and a mean squared error of $0.0010$, based on 1500 points generated within the range $[-5.12, 5.12]\times[-5.12,5.12]$. The surface generated by the neural network maintains all the properties of the original function, including its non-convex nature and the existence of many local minima, see Figure \ref{fig05}.
\begin{figure}[h]
\centering
    \includegraphics[width=0.4\linewidth]{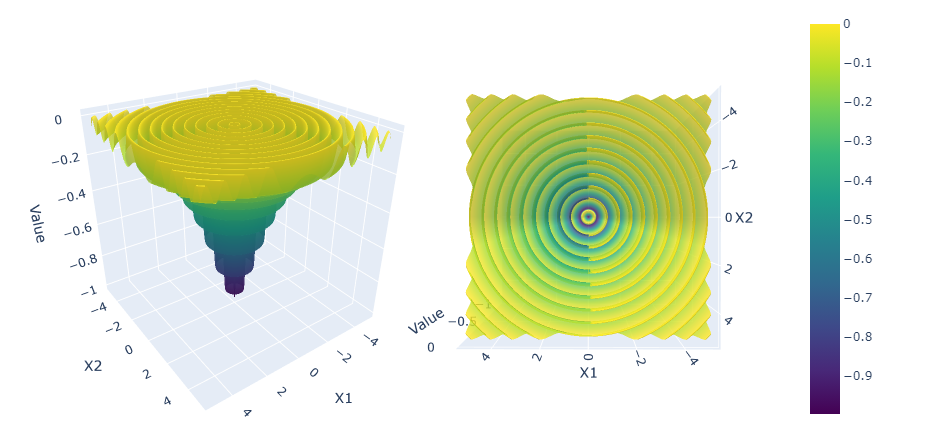}
    \caption{Drop-Wave function of two variables. Left: Perspective view; Right: Top View.}
    \label{fig04}
\end{figure}

\begin{figure}[h]
    \centering
    \includegraphics[width=0.4\linewidth]{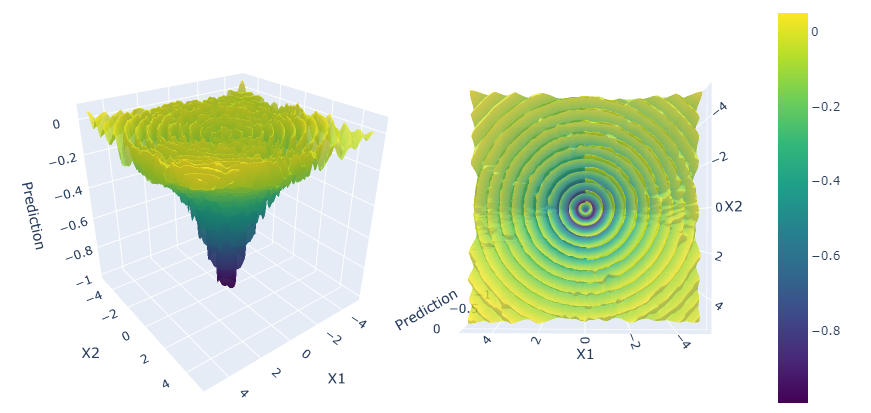}
    \caption{Drop-Wave surface generated by the ResNet $(E, \mathcal{F}(E))$. Left: Perspective view; Right: Top View.}
    \label{fig05}
\end{figure}

\begin{figure}[h]
    \centering
    \includegraphics[width=0.4\linewidth]{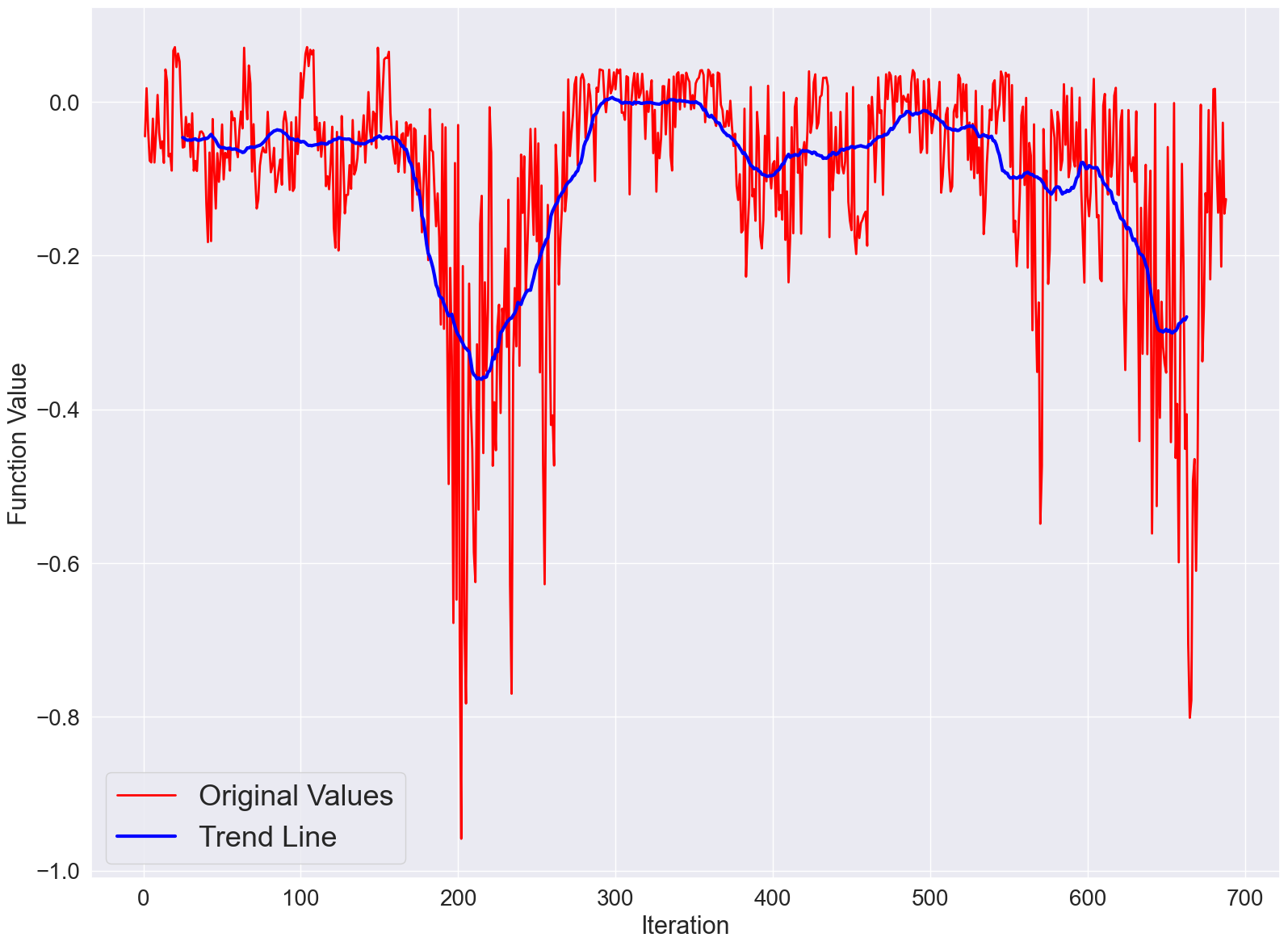}
    \caption{Red line: Sample path of Markov process $(X_i)_{i\in\mathbb{N}}$ for the Drop-Wave function. Blue line: fitted trend line.}
    \label{fig06}
\end{figure}

This second function is highly complex, and its experiment demonstrates the power of our method. Despite the presence of numerous local minima, our method successfully approximates to the global minimum. This scenario is crucial for solving highly complex problems, which are common in physics and other fields \cite{Bezhko_2021,physic_plasma}. We applied our Algorithm ~\ref{alg:algorithm} and obtained the results shown in Figure \ref{fig06}. The evolution of values assumed by the Markov process in this instance reveals the full path, where the low values we seek do not always appear at the end of the iteration, but they are still identified. The global minimum, $\mathcal{F}_{\min}=-1$, was successfully found using our method. The trend line clearly illustrates the fluctuations, making them easier to observe. Similar to the previous experiment with the Ackley function, the high fluctuations in the generated surface explain the highly fluctuating path we observed.

\subsubsection{Differences with the Classical Simulated Annealing Algorithm} 

As in the previous case, we forget for a moment the imperative need for restricted domains. Therefore, only for diagnostic purposes will we compare our proposal with the classic SA. In this highly complex example, we found an acceptable mapping DNN for The Drop-Wave function \eqref{DropWaveFunction}. In Figure \ref{fig08}, we observed a similar behavior between this classic SA algorithm and our method until iteration $300$. However, the chaotic movement of the classic SA shifted significantly towards the opposite direction of the minimum value. Moreover, if we examine closely, these values do not make sense for the original equation \eqref{DropWaveFunction}, which is easily distinguishable in Figure \ref{fig05}.

\begin{figure}[H]
    \centering
    \includegraphics[width=0.6\linewidth]{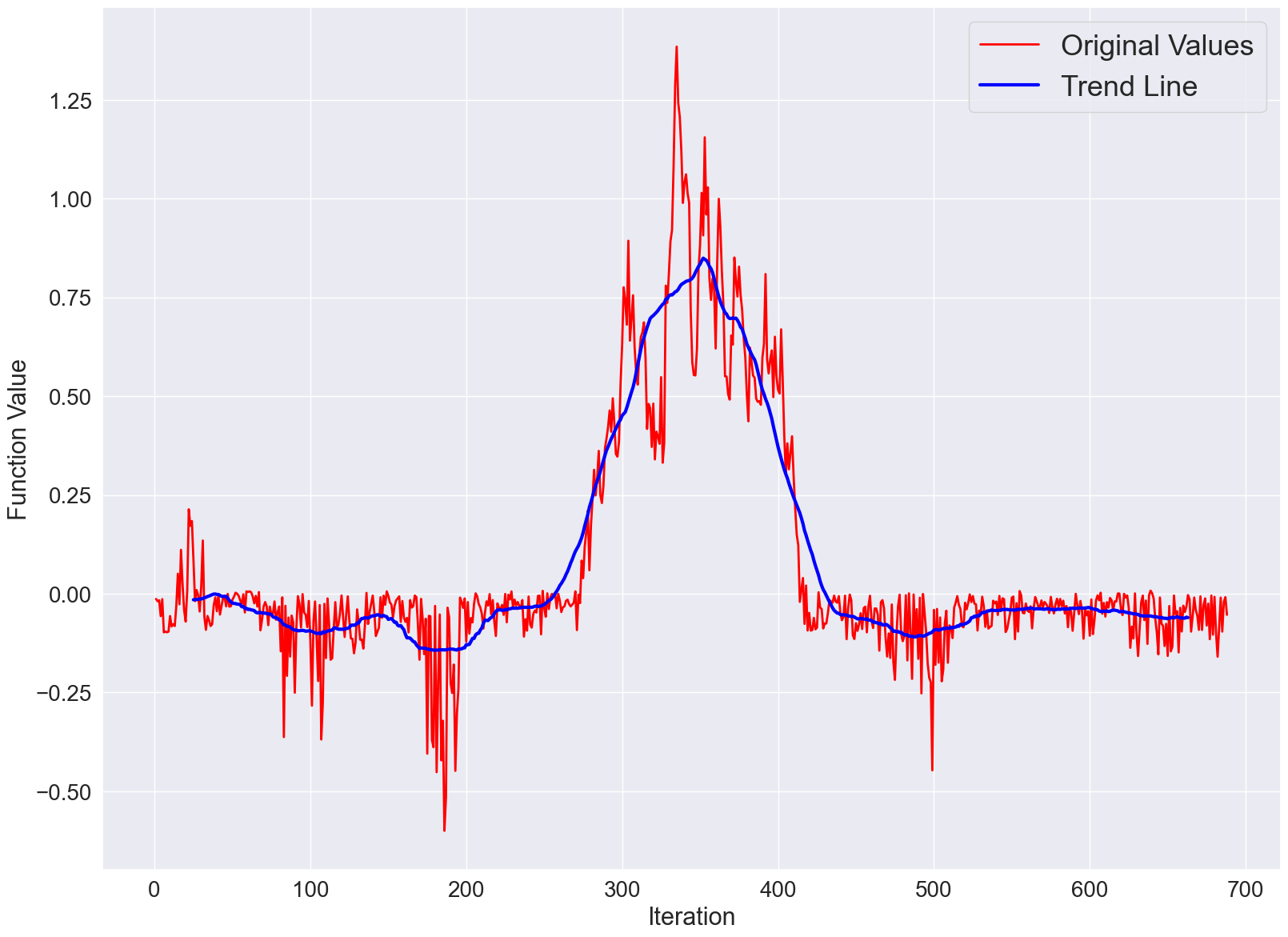}
    \caption{Red line: Sample path of the classical Simulated Annealing algorithm for the Drop-Wave function. Blue line: fitted trend line.}
    \label{fig08}
\end{figure}

\subsection{Multiple Global Minima Function}

Our method also can be applied in an space where there are multiple global minima. Also, it is not restricted for two dimension but more than two dimensions, and in particular n-dimensions, is allowed. For example, consider the function
\begin{align}\label{MultipleGlobalMinimaFunction}
   f(x, y, z) &= (x^2 - 1)^2 + (y^2 - 1)^2 + (z^2 - 1)^2,
\end{align}
where each term \((x^2 - 1)^2\), \((y^2 - 1)^2\), and \((z^2 - 1)^2\) achieves its minimum value of 0 when \(x^2 = 1\), \(y^2 = 1\), and \(z^2 = 1\), respectively. This results in two possible values for each variable: \(x = \pm 1\), \(y = \pm 1\), and \(z = \pm 1\). The combination of these values gives the following 8 global minima:
\begin{align*}
(x, y, z) \in \{&(-1, -1, -1),\, (-1, -1, 1),\, (-1, 1, -1),\\ &(-1, 1, 1),\, 
(1, -1, -1),\, (1, -1, 1),\, \\ &(1, 1, -1),\, (1, 1, 1)\}
\end{align*}

The function is evaluated on the square $E=[-3, 3]\times[-3,3]\times[-3,3]$. Then, we want to estimate the function \eqref{MultipleGlobalMinimaFunction} using only a discretized sample of points $\mathcal{D}=\{(x_{1i},x_{2i},,x_{3i},f_{i})\}_{i=1}^{m}\subset E$.

\begin{figure}[H]
    \centering
    \includegraphics[width=0.6\linewidth]{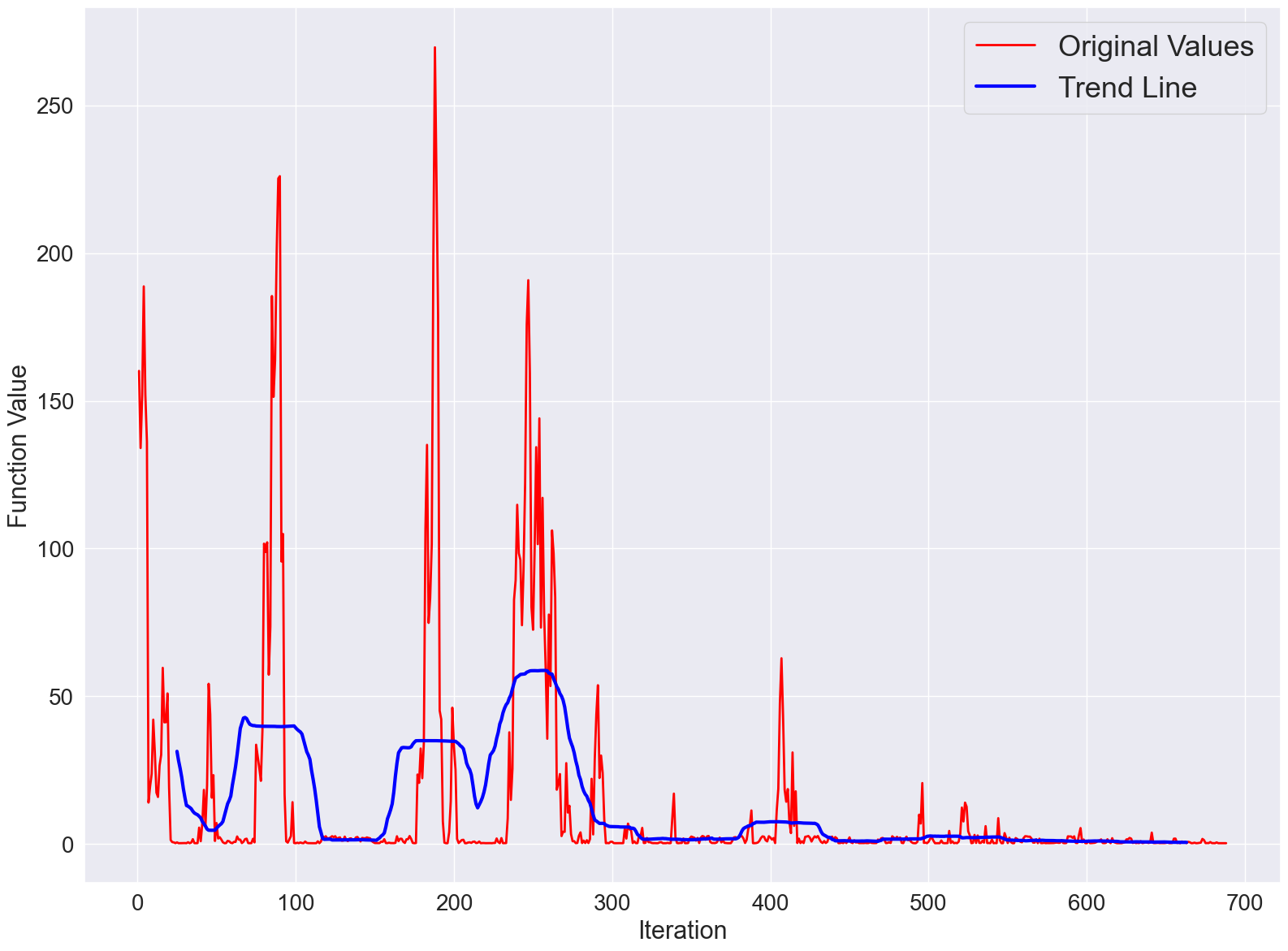}
    \caption{Red line: Sample path of Markov process $(X_i)_{i\in\mathbb{N}}$ for the Multiple Global Minima function. Blue line: fitted trend line.}
    \label{fig11}
\end{figure}

For this function, we use $L=7$ residual blocks, where $H_0 = 128$ represents the neurons in the input layer, $H_1 = H_2 = 256$, $H_l = 512$ for $l = 3, 4, 5$, $H_6 = 256$, and $H_7 = 128$, which represents the neurons in the final residual block. The output layer maps the final $128$ neurons to a single output. All layers in the network use the ReLU activation function, except for the final output layer. The model was configured with the following setup: The loss function utilized was Mean Squared Error (MSE). Training was performed using the Adam optimizer with a learning rate of $0.001$ and was performed over 1000 epochs. The fit generated by these ResNets is acceptable, with a mean absolute error of $1.4161$ and a mean squared error of $4.1011$, based on 1500 points generated within the range $[-3, 3]\times[-3,3]\times[-3,3]$. The Figure \ref{fig11} illustrates the sample path of our method, showing early convergence towards to one of the global minimum.

\subsubsection{Differences with the Classical Simulated Annealing Algorithm} 
{In this exampleinvolving a function with multiple global minima, we compare the performance of our method with the classical Simulated Annealing (SA) algorithm. The Figure \ref{fig11} illustrates the sample path of our method, showing early convergence towards to one of the global minimum and repetitive behavior in various iterations.} {In contrast, Figure \ref{fig12} depicts the sample path of the classical SA algorithm. While both methods exhibit comparable behavior in the early iterations, the classical SA algorithm demonstrates significant chaotic exploration and is slower in finding a global minimum. This behavior is evident after iteration $200$, where the function values deviate from convergence.}

\begin{figure}[H]
    \centering
    \includegraphics[width=0.6\linewidth]{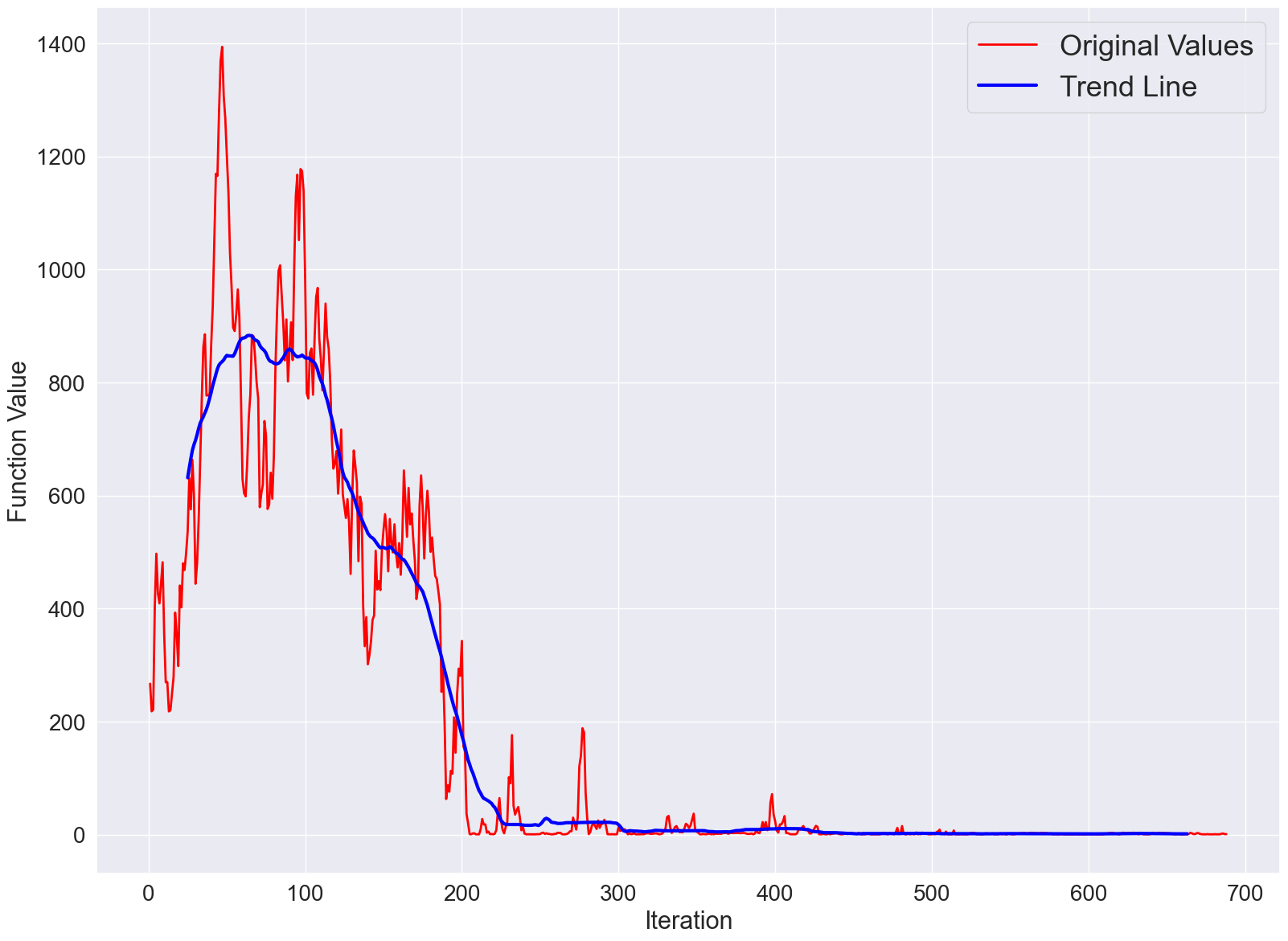}
    \caption{Red line: Sample path of the classical Simulated Annealing algorithm for Multiple Global Minima function. Blue line: fitted trend line.}
    \label{fig12}
\end{figure}

The Python codes of all the experimental evaluations are available in this \href{https://github.com/Nicerova7/output-range-analysis-for-deep-neural-networks-with-simulated-annealing}{link}.


\section{Conclusions}
    This work presents an algorithm based on Simulated Annealing (SA) for output range estimation in Deep Neural Networks (DNNs). This approach effectively adapts to restricted domains, overcoming the limitations posed by high non-linearity and the lack of local geometric information in DNNs. Through rigorous theoretical analysis, we demonstrate the algorithm’s convergence to optimal points, enabling accurate estimation of global minima and maxima.

 The experimental evaluation, using complex functions such as Ackley and Drop-Wave, validates the algorithm’s effectiveness, showcasing its ability to handle multiple local minima and multidimensional domains. Compared to classical Simulated Annealing, our method exhibits greater stability and faster convergence. With its simple implementation and versatility, this methodology is applicable to various DNN architectures, providing a practical and reliable tool for neural network verification and validation.

\section*{Data availability} 
The data and source code used in this study are publicly available at the following repository:  
\begin{center}
    \textbf{\href{https://github.com/Nicerova7/output-range-analysis-for-deep-neural-networks-with-simulated-annealing}{GitHub Repository: Output Range Analysis for Deep Neural Networks with Simulated Annealing}}
\end{center}

This study uses data and code to analyze the output range of deep neural networks based on simulated annealing processes. To ensure the reproducibility of the experiments and facilitate replication, the data and source code are freely accessible.

\subsection*{Source Data}  
If the data has been previously published, details of the dataset and where it can be accessed should be provided here.

\subsection*{Underlying Data}  
The repository contains all data used in this study, including input and output files, Python implementation code, and experimental results. The key contents include:

\begin{itemize}
    \item \textbf{README.md}: Provides an overview of the project, detailing the approach and methodology for output range analysis of deep neural networks using simulated annealing. It includes instructions for reproducing the results and relevant references.
    
    \item \textbf{neural\_network.py}: Implements the deep neural network architecture used in the study. It defines the \texttt{DeeperResidualNN} class, the loss function (\texttt{criterion}), the optimizer (\texttt{optimizer}), and the training loop for fitting the model to the dataset.
    
    \item \textbf{simulated\_annealing.py}: Implements the simulated annealing algorithm used to estimate the output range of the neural network. It defines the \texttt{simulated\_annealing} function, which optimizes over non-convex surfaces, including parameters such as initial solution, maximum and minimum temperature, cooling rate, number of iterations, intervals, and sigma.
\end{itemize}

The data and code are freely available without restrictions for use and reproduction.

\section*{Competing interests}
The authors declare that there are no financial, personal, or professional competing interests that could have influenced the content of this article.

\section*{Grant information}
The authors declare that no grants were involved in supporting this work.

\bibliographystyle{plainnat} 
\bibliography{references}

@inproceedings{dutta2018output,
  title={Output range analysis for deep feedforward neural networks},
  author={Dutta, Souradeep and Jha, Susmit and Sankaranarayanan, Sriram and Tiwari, Ashish},
  booktitle={NASA Formal Methods Symposium},
  pages={121--138},
  year={2018},
  publisher={Springer}
}

@inproceedings{wang2018formal,
  title={Formal security analysis of neural networks using symbolic intervals},
  author={Wang, Shiqi and Pei, Kexin and Whitehouse, Justin and Yang, Junfeng and Jana, Suman},
  booktitle={27th USENIX Security Symposium (USENIX Security 18)},
  pages={1599--1614},
  year={2018}
}

@inproceedings{tran2020nnv,
  title={NNV: the neural network verification tool for deep neural networks and learning-enabled cyber-physical systems},
  author={Tran, Hoang-Dung and Yang, Xiaodong and Manzanas Lopez, Diego and Musau, Patrick and Nguyen, Luan Viet and Xiang, Weiming and Bak, Stanley and Johnson, Taylor T},
  booktitle={International Conference on Computer Aided Verification},
  pages={3--17},
  year={2020},
  publisher={Springer}
}

@article{huang2020divide,
  title={Algorithms for verifying deep neural networks},
  author={Huang, Chao and Fan, Jiameng and Chen, Xin and Li, Wenchao and Zhu, Qi},
  journal={IEEE Transactions on Computer-Aided Design of Integrated Circuits and Systems},
  volume={39},
  number={11},
  pages={3323--3335},
  year={2020},
  publisher={IEEE}
}

@article{liu2021algorithms,
  title={Algorithms for verifying deep neural networks},
  author={Liu, Changliu and Arnon, Tomer and Lazarus, Christopher and Strong, Christopher and Barrett, Clark and Kochenderfer, Mykel J and others},
  journal={Foundations and Trends{\textregistered} in Optimization},
  volume={4},
  number={3-4},
  pages={244--404},
  year={2021},
  publisher={Now Publishers, Inc.}
}

@inproceedings{he2016deep,
  title={Deep residual learning for image recognition},
  author={He, Kaiming and Zhang, Xiangyu and Ren, Shaoqing and Sun, Jian},
  booktitle={Proceedings of the IEEE conference on computer vision and pattern recognition},
  pages={770--778},
  year={2016}
}

@article{lecun2015deep,
  title={Deep learning},
  author={LeCun, Yann and Bengio, Yoshua and Hinton, Geoffrey},
  journal={Nature},
  volume={521},
  number={7553},
  pages={436--444},
  year={2015},
  publisher={Nature Publishing Group UK London}
}

@book{goodfellow2016deep,
  title={Deep learning},
  author={Goodfellow, Ian and Bengio, Yoshua and Courville, Aaron},
  year={2016},
  publisher={MIT press}
}

@book{bishop2006pattern,
  title={Pattern recognition and machine learning},
  author={Bishop, Christopher M},
  year={2006},
  publisher={Springer google schola}
}

@book{murphy2012machine,
  title={Machine learning: a probabilistic perspective},
  author={Murphy, Kevin P},
  year={2012},
  publisher={MIT press}
}

@article{haario1991simulated,
  title={Simulated annealing process in general state space},
  author={Haario, Heikki and Saksman, Eero},
  journal={Advances in Applied Probability},
  volume={23},
  number={4},
  pages={866--893},
  year={1991},
  publisher={Cambridge University Press}
}

@article{kirkpatrick1983optimization,
  title={Optimization by simulated annealing},
  author={Kirkpatrick, Scott and Gelatt, C. Daniel and Vecchi, Mario P.},
  journal={Science},
  volume={220},
  number={4598},
  pages={671--680},
  year={1983},
  publisher={American Association for the Advancement of Science},
  doi={10.1126/science.220.4598.671},
  pmid={17813860}
}

@article{katz2017reluplex,
  title={Reluplex: An Efficient SMT Solver for Verifying Deep Neural Networks},
  author={Katz, Guy and Barrett, Clark and Dill, David and Julian, Kyle and Kochenderfer, Mykel},
  journal={arXiv preprint arXiv:1702.01135},
  year={2017},
  url={https://arxiv.org/abs/1702.01135}
}

@article{carlini2018provably,
  title={Provably Minimally-Distorted Adversarial Examples},
  author={Carlini, Nicholas and Katz, Guy and Barrett, Clark and Dill, David L.},
  journal={arXiv preprint arXiv:1709.10207},
  year={2018},
  url={https://arxiv.org/abs/1709.10207}
}

@article{Bezhko_2021,
doi = {10.35848/1347-4065/ac3721},
url = {https://dx.doi.org/10.35848/1347-4065/ac3721},
year = {2021},
month = {dec},
publisher = {IOP Publishing},
volume = {60},
number = {12},
pages = {121002},
author = {Mikhail Bezhko and Safumi Suzuki and Masahiro Asada},
title = {Analysis of output power characteristics for resonant-tunneling diode terahertz oscillator with cylindrical cavity resonator},
journal = {Japanese Journal of Applied Physics},
abstract = {An increasing number of novel applications has appeared in the previously unexplored frequency range of 0.3–3.0 THz, where sub-mm radio waves meet far-infrared optical waves. Resonant-tunneling diodes (RTDs) are considered as one of the promising compact and coherent room-temperature signal sources for terahertz (THz) applications. In this work, dependencies of output power on the resonator dimensions and output power limitation factors are analyzed for an RTD THz oscillator with a cylindrical cavity resonator, which can oscillate above 2 THz. Analysis of the output power dependencies on radius and height of the cylindrical resonant cavity shows that a decrease in the resonant cavity size could lead to an increase in the output power at a fixed frequency for this type of RTD oscillator. Moreover, in addition to the high-frequency oscillation limit, a rapid decrease in the output power in the lower frequency region was found for oscillator devices with larger RTD mesa areas. Rapid decrease in output power may occur even at frequencies around 1 THz, which could considerably limit the operational range for RTD oscillators with cavity-type resonators. To determine an approach for output power optimization and understand the nature of output power drop at lower frequencies, the output power behavior and connection with resonant cavity parameters were explained in detail. Results of the output power analysis and numerical calculation indicate that for the RTD structure and circular-resonator geometry considered in the present study, output powers up to 45 μW at 1.5 THz and up to 0.25 μW at 2.5 THz could be expected for single oscillator design.}
}

@article{physic_plasma,
    author = {Vijvers, W. A. J. and van Gils, C. A. J. and Goedheer, W. J. and van der Meiden, H. J. and Schram, D. C. and Veremiyenko, V. P. and Westerhout, J. and Lopes Cardozo, N. J. and van Rooij, G. J.},
    title = "{Optimization of the output and efficiency of a high power cascaded arc hydrogen plasma source}",
    journal = {Physics of Plasmas},
    volume = {15},
    number = {9},
    pages = {093507},
    year = {2008},
    month = {09},
    abstract = "{The operation of a cascaded arc hydrogen plasma source was experimentally investigated to provide an empirical basis for the scaling of this source to higher plasma fluxes and efficiencies. The flux and efficiency were determined as a function of the input power, discharge channel diameter, and hydrogen gas flow rate. Measurements of the pressure in the arc channel show that the flow is well described by Poiseuille flow and that the effective heavy particle temperature is approximately 0.8eV. Interpretation of the measured I-V data in terms of a one-parameter model shows that the plasma production is proportional to the input power, to the square root of the hydrogen flow rate, and is independent of the channel diameter. The observed scaling shows that the dominant power loss mechanism inside the arc channel is one that scales with the effective volume of the plasma in the discharge channel. Measurements on the plasma output with Thomson scattering confirm the linear dependence of the plasma production on the input power. Extrapolation of these results shows that (without a magnetic field) an improvement in the plasma production by a factor of 10 over where it was in van Rooij et al. [Appl. Phys. Lett. 90, 121501 (2007)] should be possible.}",
    issn = {1070-664X},
    doi = {10.1063/1.2979703},
    url = {https://doi.org/10.1063/1.2979703},
    eprint = {https://pubs.aip.org/aip/pop/article-pdf/doi/10.1063/1.2979703/15875890/093507\_1\_online.pdf},
}

@article{kingma2017adam,
  title={Adam: A method for stochastic optimization},
  author={Kingma, Diederik P and Ba, Jimmy},
  journal={arXiv preprint arXiv:1412.6980},
  year={2017},
  volume={},
  number={},
  pages={},
  publisher={arXiv}
}

@article{Hajek1988,
  author  = {Hajek, Bruce},
  title   = {Cooling schedules for optimal annealing},
  journal = {Mathematics of Operations Research},
  year    = {1988},
  volume  = {13},
  number  = {2},
  pages   = {311--329}
}

@book{LaarhovenAarts1987,
  author    = {van Laarhoven, Peter J. M. and Aarts, Emile H. L.},
  title     = {Simulated Annealing: Theory and Applications},
  publisher = {Springer},
  address   = {Dordrecht},
  year      = {1987}
}

@book{AartsKorst1989,
  author    = {Aarts, Emile H. L. and Korst, Jan H. M.},
  title     = {Simulated Annealing and Boltzmann Machines: A Stochastic Approach to Combinatorial Optimization and Neural Computing},
  publisher = {John Wiley \& Sons},
  address   = {Chichester},
  year      = {1989}
}

@article{NouraniAndresen1998,
  author  = {Nourani, Youssef and Andresen, Brigitte},
  title   = {A comparison of simulated annealing cooling strategies},
  journal = {Journal of Physics A: Mathematical and General},
  year    = {1998},
  volume  = {31},
  number  = {41},
  pages   = {8373--8385}
}

@article{LundyMees1986,
  author  = {Lundy, Mark and Mees, Adrian},
  title   = {Convergence of an annealing algorithm},
  journal = {Mathematical Programming},
  year    = {1986},
  volume  = {34},
  number  = {1},
  pages   = {111--124}
}

@article{Gromov2024,
  author = {Nikolay V. Gromov and L. A. Smirnov and Tatiana A. Levanova},
  title = {Prediction of extreme events and chaotic dynamics using WaveNet},
  journal = {Cybernetics and Physics},
  volume = {13},
  number = {1},
  pages = {20--31},
  year = {2024},
  doi = {10.35470/2226-4116-2024-13-1-20-31}
}

@article{Kovaleva2024,
  author = {Margarita Kovaleva and B. B. Smirnov},
  title = {Anti-resonance in the systems of nonlinear oscillators},
  journal = {Cybernetics and Physics},
  volume = {13},
  number = {3},
  pages = {211--219},
  year = {2024},
  doi = {10.35470/2226-4116-2024-13-3-211-219}
}

\end{document}